\def\Secref#1{Section~\ref{#1}}
\def\eqref#1{equation~\ref{#1}}
\def\Eqref#1{Equation~\ref{#1}}
\def\1{\bm{1}}
\DeclareMathAlphabet{\mathsfit}{\encodingdefault}{\sfdefault}{m}{sl}
\SetMathAlphabet{\mathsfit}{bold}{\encodingdefault}{\sfdefault}{bx}{n}
\newcommand{\KL}{D_{\mathrm{KL}}}
\definecolor{myblue}{RGB}{0,0,0}
\theoremstyle{definition}
\newtheorem{definition}{Definition}[section] 
\theoremstyle{plain}
\newtheorem{theorem}[definition]{Theorem}     
\title{Flatter Tokens are More Valuable for \\ Speculative Draft Model Training}
\author{Jiaming Fan$^{1,2}$\thanks{Equal contribution} \hspace{0.6mm} Daming Cao$^3$\footnotemark[1] \hspace{0.6mm} Xiangzhong Luo$^{1,2}$\thanks{Corresponding author} \hspace{0.6mm} Jiale Fu$^{1,2}$ \hspace{0.3mm}Chonghan Liu$^4$ \hspace{0.3mm}Xu Yang$^{1,2}$ \\$^1$Key Laboratory of New Generation Artificial Intelligence Technology and Its Interdisciplinary \\ Applications (Southeast University), Ministry of Education \quad $^2$Southeast University \\ $^3$Nanjing University of Information Science and Technology \quad 
$^4$Qiyuan Tech\\
\texttt{jiaming.fan@seu.edu.cn \quad dmcao@nuist.edu.cn}  \\ \texttt{xiangzhong.luo@seu.edu.cn} \\
}
\begin{document}

\maketitle

\begin{abstract}

Speculative Decoding (SD) is a key technique for accelerating Large Language Model (LLM) inference, but it typically requires training a draft model on a large dataset. We approach this problem from a data-centric perspective, finding that not all training samples contribute equally to the SD acceptance rate. Specifically, our theoretical analysis and empirical validation reveals that tokens inducing flatter predictive distributions from the target model are more valuable than those yielding sharply peaked distributions. Based on this insight, we propose \textit{\texttt{flatness}}, a new metric to quantify this property, and develop the Sample-level-flatness-based Dataset Distillation (SFDD) approach, which filters the training data to retain only the most valuable samples. Experiments on the EAGLE framework demonstrate that SFDD can achieve over 2$\times$ training speedup using only 50\% of the data, while keeping the final model's inference speedup within 4\% of the full-dataset baseline. This work introduces an effective, data-centric approach that substantially improves the training efficiency for Speculative Decoding. Our code is available at \url{https://github.com/fjm9933/Flatness}.

\end{abstract}

\section{Introduction}
\label{sec:introduction}

Large language models (LLMs) have demonstrated remarkable success across a myriad of downstream tasks, such as generation, comprehension, and interaction \citep{achiam2023gpt, guo2025deepseek, touvron2023llama}. Despite the success, modern LLMs rely on autoregressive decoding, where each token must be generated in sequence based on all previous tokens. This inherently sequential process suffers from inferior parallelism, which leads to high latency and low throughput \citep{li2024large}. A recent effort to tackle this dilemma is \textit{speculative decoding} (SD) \citep{leviathan23, chen23}. SD leverages a small \textit{draft} model to quickly generate multiple tokens, which are then verified in parallel by the larger \textit{target} model. As a result, the target model can accept multiple tokens in a single forward pass without degrading the quality of generation, where higher acceptance rates can equivalently translate into better inference speedups.

The success of SD has subsequently inspired a plethora of SD variants, which can be broadly divided into \textit{train-free} and \textit{train-based} categories. Among them, train-free SD methods employ off-the-shelf lightweight LLMs as the drafter, which can offer simplicity and cost-effectiveness without additional training \citep{leviathan23,chen23,zhang24,miao24,gong24,jacobi23}. Nonetheless, these methods suffer from poor alignment between the draft and target models, which often results in low acceptance rates and frequent rollbacks. In contrast, train-based SD methods introduce an additional training to align the draft model with the target model, which can substantially improve acceptance rates compared to their train-free counterparts \citep{zhou2023distillspec,li2024eagle-1,cai24medusa,elhoushi24layerskip,bachmann25judge,yi24space,monea23pass,qin24mtad}.

Despite the promising progress, current train-based SD methods still face critical limitations. In practice, these methods leverage vanilla knowledge distillation (KD) \citep{hinton2015distilling} as the default strategy to align the draft model with the target model. However, a subtle yet fundamental discrepancy exists between the objectives of vanilla KD and SD: while vanilla KD minimizes the KL-divergence between the student (draft model) and teacher (target model) output distributions, SD focuses on maximizing the acceptance rate, which is theoretically linked to the $L_1$-norm between these two distributions, as proved in \citep{leviathan23}. Motivated by this theoretical insight, recent studies have investigated the direct use of the $L_1$-norm as an alternative training objective \citep{zhou2023distillspec}. Nonetheless, empirical findings indicate that this approach is not consistently effective and can, in some cases, underperform even the standard KL-divergence-based distillation. While the precise reasons for these counterintuitive results remain unclear, this evidence strongly suggests that simply substituting the loss function is insufficient, and revisiting the question of which portions of the data actually provide the most meaningful training signal is warranted, rather than solely focusing on the choice of loss function.

We therefore revisit KD in the context of SD and introduce a simple theoretical model to reflect improvements in acceptance rate after a single KD step. We view one update of the draft distribution as a budget‑limited move toward the teacher. This abstraction mirrors standard KD practice while allowing us to ask a concrete question: which target‑side token distributions yield the largest acceptance‑rate gains per unit of training? Our toy example analysis and empirical studies indicate a token-level insight: tokens with flatter target distributions (closer to uniform) deliver the most per-step reduction in the draft–target discrepancy that governs the acceptance rate, whereas sharply peaked tokens contribute little and saturate quickly. This reframes the importance of tokens for SD relative to classical KD: what matters is not only the choice of loss but also where the useful signal lies in the data. Significantly, this criterion depends only on the target model and can be computed offline, without warming up a draft model or tracking its changing predictions. Nevertheless, current training‑based SD systems (e.g., the EAGLE series~\citep{li2024eagle-1,li2024eagle-2,li2025eagle-3}) essentially train on all tokens, overlooking this heterogeneity and incurring avoidable overhead. These observations motivate filtering out low‑value tokens to improve efficiency while preserving acceptance.

Guided by this principle, we introduce a practical \textit{\texttt{flatness}} metric that scores each token by how close the target model’s distribution is to uniform, instantiated with a simple cosine‑based similarity. Empirical evaluations on real LLMs reveal that the \textit{\texttt{flatness}} metric serves as a reliable headroom indicator of potential improvement: tokens with higher flatness (more uniform targets) undergo larger expected updates and yield substantial reductions in acceptance-related discrepancies. In contrast, tokens characterized by low flatness (i.e., sharply peaked distributions) contribute minimally. This clear distinction consistently emerges under a target-sorted perspective. We then aggregate token-level scores to the sample level, enabling an effective data-selection approach. We term this approach \emph{Sample-level-flatness-based Dataset Distillation} (SFDD), which yields a simple pipeline: (i) run a single offline pass of the target model to compute sample-level-flatness, (ii) rank and retain the high‑value samples, and (iii) train the draft model on the filtered data. Plugged into EAGLE-2~\citep{li2024eagle-2}, our selection preserves speedup while substantially reducing training time, and it outperforms common data selection metrics, such as entropy \citep{Li2021DynamicKD}, top-1 probability \citep{hendrycks2017baseline}, the margin between the top two probabilities \citep{tong2001svmactive,bahri2023margin}, 
Energy Score \citep{liu2020energy}, 
and perplexity (PPL) \citep{chen1999empirical,meister2021beyond}. Finally, we summarize our main contributions as follows:
\begin{itemize}[leftmargin=*]
    \item \textbf{Revisiting KD for SD with an acceptance‑centric lens.} We analyze a single KD step through a budget‑limited update toward the teacher and show that tokens with flatter target distributions are disproportionately valuable for improving acceptance, whereas highly peaked tokens offer diminishing returns. Crucially, the resulting importance criterion depends only on the target model, allowing for offline scoring without a warmed-up student.
    
    \item \textbf{A simple, empirically strong importance metric.} We propose \textit{\texttt{flatness}} as a practical proxy for token and sample importance in SD training, and demonstrate that it outperforms previous heuristics for identifying high-value training data on sample selection.
    
    \item \textbf{An efficient data-selection method for train-based SD.} Our SFDD method is effective across various data retention ratios. For example, at
    50\% retain ratio, we achieve over 2$\times$ training speedup using only half the data, while also significantly outperforming other selection metrics and keeping the final model's inference speedup within 4\% of the full-dataset baseline.
\end{itemize}

\section{Related work} 
 \label{sec:Related Work} 

\textbf{Speculative decoding}. Speculative Decoding (SD) accelerates autoregressive generation via a "draft-and-verify" paradigm, with approaches broadly categorized as training-free or train-based methods. (1) Training-Free SD requires no new parameters, instead modifying inference via methods like rejection sampling \citep{leviathan23,chen23}, reusing target model parts \citep{zhang24}, or parallel candidate verification \citep{miao24,gong24,jacobi23}. While easily deployable, these reliance on heuristics often results in limited acceptance rates. (2) Train-Based SD fine-tunes a draft model for better alignment, either through distillation \textcolor{myblue}{\citep{zhou2023distillspec, goel2024direct}} or by augmenting the target model with trainable components, including lightweight prediction heads \citep{li2024eagle-1,li2024eagle-2,li2025eagle-3,cai24medusa}, trainable early exits \citep{elhoushi24layerskip}, or auxiliary modules for multi-token prediction or lenient verification \citep{bachmann25judge,yi24space,monea23pass,qin24mtad}. These methods provide significantly higher and more stable speedups. \textcolor{myblue}{In addition, some recent works \citep{zhou2023distillspec, goel2024direct} leverage the theoretical objective (e.g., total variation distance) to improve alignment; however, they focus on loss function optimization to improve alignment. In contrast, our work targets efficient draft model training from the perspective of dataset distillation, remaining orthogonal to these train-based methods that introduce non-trivial training and deployment overhead.}


\textbf{Data importance measurement methods}. Following trends in selective learning and aligned controllable generation \citep{zhu2024enhancing,zhu2024selective,zhu2025dynamic,zhao2025real,zhao2025unsupervised}, we focus on data importance from two perspectives: (1) Distributional Uncertainty: This approach gauges importance via the model's uncertainty, based on the intuition that the most informative samples are those the model is unsure about. This is often quantified by metrics such as the Shannon entropy of the output distribution \citep{Li2021DynamicKD, Wang2025HighEntropy}, the Energy Score derived from logits \citep{liu2020energy}, or the sample difficulty measured by Perplexity (PPL) \citep{chen1999empirical,meister2021beyond}. (2) Category Probability: This second approach derives importance from salient category probabilities. It includes using the Top-1 Probability or logit as a direct measure of model confidence \citep{hendrycks2017baseline, maxsup2025}, the margin between the top two probabilities ($p_{(1)} - p_{(2)}$) to gauge ambiguity \citep{tong2001svmactive,bahri2023margin}, or the ground-truth token's probability to up-weight difficult examples \citep{Lin2017FocalLoss, Kim2023TSLD}. However, these heuristics are generally designed for standard training objectives like improving model accuracy or distribution fidelity. In contrast, our work is the first to systematically investigate data importance from the unique perspective of SD, where the central focus is the token acceptance rate.
\section{Preliminaries}
\label{sec-2:preliminaries}

\subsection{Speculative decoding}
\label{sec-2:speculative-decoding}

As shown in \citep{leviathan23, chen23}, speculative decoding (SD) utilizes a small, fast draft model and a large, powerful target model. The process begins with the draft model autoregressively generating a sequence of $\gamma$ candidate tokens, which are then verified in parallel by the target model. Formally, given a context of previously generated tokens $h$, we denote the candidate probability distribution from the draft model as $q(\cdot|h)$ and the reference distribution from the target model as $p(\cdot|h)$. Each candidate token $y$ is validated via rejection sampling and accepted with a probability of $\beta(y)=\min\!\left(1,\ \frac{p(y\mid h)}{q(y\mid h)}\right)$. If a candidate is rejected, the generation process is rolled back to the last accepted token. A new token is then drawn from the residual distribution $r(\cdot\mid h)\propto\big[p(\cdot\mid h)-q(\cdot\mid h)\big]_+$ (where $[x]_+=\max\{x,0\}$) to ensure that the final output is equivalent to sampling directly from the target distribution $p(\cdot|h)$.

\textbf{Acceptance rate.}
In SD, the acceptance rate $\alpha(h)$ is a key performance metric, defined as the ratio of candidate tokens proposed by the draft model that are verified and accepted by the target model. In practice, higher acceptance rates can equivalently translate into better inference speedups. Prior work \citep{leviathan23} has shown that the acceptance rate is linearly decreasing in the $L_1$-norm between the target model's distribution $p$ and the draft model's distribution $q$ :
\begin{equation}
    \alpha(h)=\mathbb{E}_{y\sim q(\cdot\mid h)}[\beta(y)]
    = \sum\nolimits_{y}\min\{p(y\mid h),q(y\mid h)\}
    = 1-\tfrac12\big\|p(\cdot\mid h)-q(\cdot\mid h)\big\|_{1}.
\end{equation}
Thus, a per‑token improvement in acceptance equals a per‑token reduction in the $L_1$-norm distance. In light of this, maximizing the token-level acceptance rate is strictly equivalent to minimizing the $L_1$-norm distance between the output distributions of the target model and draft model.

\subsection{Theoretical analysis of KD in SD}
\label{sec-2:knowledge-distillation-in-kd}

In this section, we investigate knowledge distillation (KD) in the context of SD. We start from the optimization objective of SD and then present both empirical analysis and experimental validation.

\label{Theoretical Analysis}

\textbf{Training objective.}
An intuitive approach to optimizing the acceptance rate is to directly use the $L_1$-norm as a training loss. However, this can yield suboptimal results on certain tasks compared to the conventional KL-divergence objective \citep{zhou2023distillspec}. This further leads us to investigate how the characteristics of the target model's token distribution $p$ and the draft model's token distribution $q$ influence the $L_1$-norm. Analyzing the draft model's distribution $q$ is challenging, as it is a moving target during training. The target distribution $p$, however, remains fixed within a given context. This stability allows us to pursue an empirical goal: identifying the properties of an optimal target distribution $p$ that are beneficial regardless of the specific state of $q$.

At the same time, a small $L_1$-norm does not necessarily indicate a valuable training opportunity. For instance, if $q$ is already very close to $p$, the $L_1$-norm will be minimal, but training on this token will also yield a negligible contribution to the model's improvement. This insight suggests that the true measure of a token's value is not the static $L_1$-norm itself, but the potential training contribution. We therefore quantify this contribution as the reduction in the $L_1$-norm achieved in a single training step, denoted as $\Delta_{L_1}$. Formally, given an initial draft distribution $q$ and an updated distribution $r^*$ after one step, this contribution is defined as:
\begin{equation} \label{eq:delta_l1}
	\Delta_{L_1} = \|p - q\|_1 - \|p - r^*\|_1.
\end{equation}
Our primary objective is to explore characteristics of the target token distribution $p$ that maximize $\Delta L_1$ for a given draft token distribution $q$. To study this, we introduce $r^*$ as a theoretical proxy for the draft distribution after a \emph{single, idealized} update. This theoretical toy model serves as a simplified analytical model to illustrate how a budget-constrained training step might ideally shift the draft distribution toward the target. Notably, $r^*$ is not employed as a practical training target; instead, it functions solely as an analytical tool to understand incremental improvements.

\begin{figure}[t]
    \vspace{-5mm}
    \centering
    \subfloat[$\Delta_{L_1}$ vs. Variance ($\theta=2$)]{
        \includegraphics[width=0.48\textwidth]{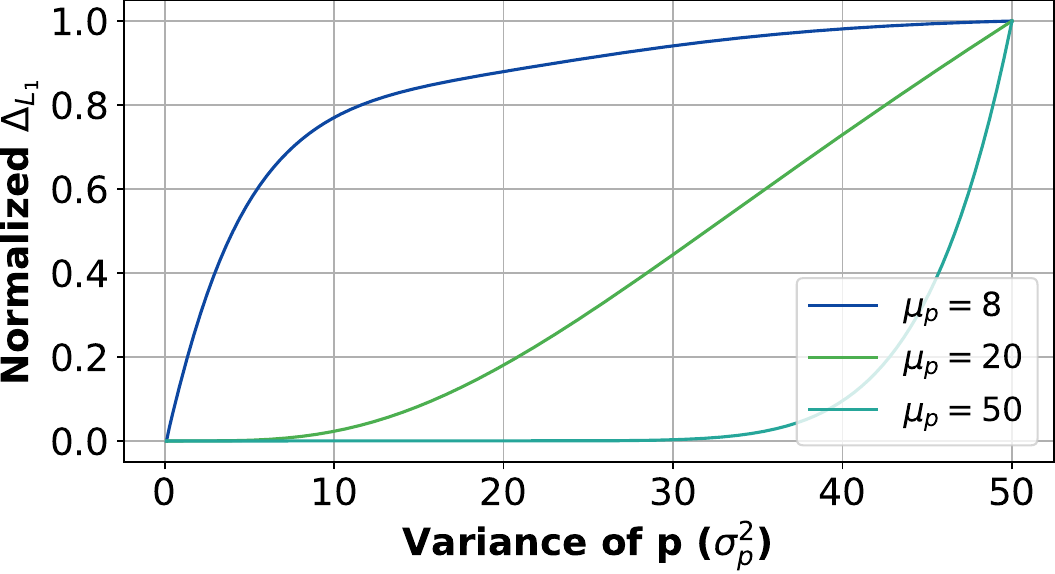}
        \label{fig:l1_reduction_vs_variance}
    }
    \hfill 
    \subfloat[Cosine Similarity to Uniform vs. $\sigma$]{
        \includegraphics[width=0.48\textwidth]{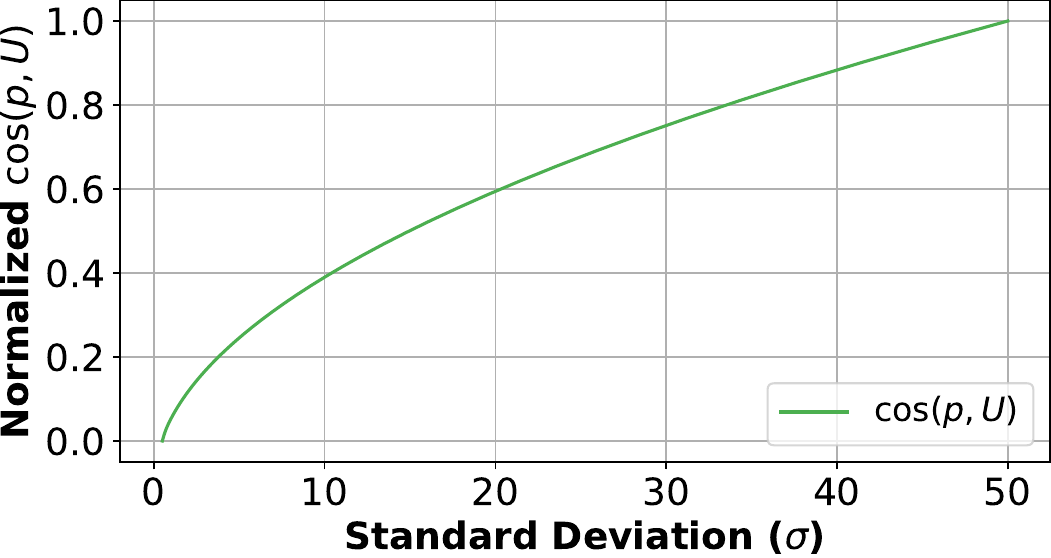}
        \label{fig:discrepancy_vs_std}
    }
    \caption{
        \textbf{Simulation of importance metrics.} We use Gaussian distributions in our simulation because they are analytically tractable and effectively capture key distributional properties like variance. For visualization clarity, all plotted quantities are min–max normalized to $[0,1]$. (a) Across small, medium, and large separations between the means of $p$ and $q$, $\Delta_{L_1}$ consistently increases with the target variance ($\sigma_p^2$).
        (b) Cosine similarity to uniform increases monotonically with the standard deviation ($\sigma$), validating its use as a practical proxy that directly tracks changes in variance.
    }
    \label{fig:simulation_results_and_importance}
\end{figure}

Modeling a single training step inevitably involves simplifications, and various formalisms could potentially serve this purpose. We select an approach closely aligned with standard KD practices, yet analytically tractable: our \emph{objective} adheres to KD by minimizing $\KL(p\|r)$, while we simultaneously impose a small-step \emph{budget} constraint to reflect that an update must remain close to the original draft distribution $q$. Crucially, our insights do not depend strongly on the specific choice of budget measurement; alternative measures such as $L_p$ norm, Jensen–Shannon divergence, or other suitable metrics would yield similar conclusions. We adopt KL divergence $\KL(r\|q)$ here, as this choice facilitates a concise and explicit analytical form for $r^*$ under the subsequent Gaussian setting, thus ensuring that our downstream analysis remains transparent and interpretable:
\begin{equation}
    r^* \;=\; \arg\min_{r}\; \KL(p\|r)
    \quad \text{s.t.} \quad \KL(r\|q) \;\le\; \theta,
\end{equation}
where $\theta \ge 0$ plays the role of a step‑size budget (capturing, e.g., learning‑rate and optimizer effects) that limits how far $r$ can deviate from $q$ in one update.

\textbf{Solution for parametric Gaussian distributions}.
Analyzing the optimal distribution $r^*$ in its general non-parametric form is analytically challenging. To obtain analytical insights, we first restrict the above optimization problem to the parametric family of Gaussian distributions. Specifically, we use $p = \mathcal{N}(\mu_p, \sigma_p^2)$ to denote the target token distribution and $q = \mathcal{N}(\mu_q, \sigma_q^2)$ to denote the draft token distribution. Using the Karush-Kuhn-Tucker (KKT) conditions (see Appendix~\ref{app:gaussian_derivation} for the detailed derivation), we solve for the optimal distribution $r^* = \mathcal{N}(\mu_r^*, \sigma_r^{2*})$, whose parameters are:
\begin{equation} \label{eq:mu_r_star}
    \textcolor{myblue}{\mu_r^*} = (1-\textcolor{myblue}{\tau^*})\mu_p + \textcolor{myblue}{\tau^*}\mu_q,
\end{equation}
\begin{equation} \label{eq:sigma_r_star}
    \textcolor{myblue}{\sigma_r^{2*}} = (1-\textcolor{myblue}{\tau^*})\sigma_p^2 + \textcolor{myblue}{\tau^*}\sigma_q^2 + \textcolor{myblue}{\tau^{*2}}(1-\textcolor{myblue}{\tau^*})(\mu_p-\mu_q)^2.
\end{equation}
The optimal distribution $r^*$ is found by minimizing $\KL(p\|r)$ under the constraint that $\KL(r\|q)\le\theta$. It lies on a path between $p$ and $q$, and its position on this path is determined by a single parameter $\textcolor{myblue}{\tau^*} \in [0, 1]$. This parameter quantifies the extent of the update in a single training step: $\textcolor{myblue}{\tau^*=0}$ corresponds to a full update where $r^*$ becomes $p$, while $\textcolor{myblue}{\tau^*=1}$ means no update has occurred ($r^*$ remains $q$). The specific value of $\textcolor{myblue}{\tau^*}$ is uniquely determined by the training budget $\theta$.

\textbf{Simulation results.}
Although the above solution provides an analytical form for the optimal parameters $(\mu_r^*, \sigma_r^{2*})$, the path parameter $\textcolor{myblue}{\tau^*}$ itself lacks a closed-form solution and must be determined numerically. Therefore, to investigate the properties of this solution, we establish a numerical simulation, where the draft distribution $q$ is fixed as the standard normal distribution ($\mu_q=0, \sigma_q^2=1$) and the training budget is a fixed $\theta$. For various target token distributions $p$ (defined by sweeping their mean $\mu_p$ and variance $\sigma_p^2$), we first solve for the optimal path parameter $\textcolor{myblue}{\tau^*}$, based on which we can derive the parameters of $r^*$. With the updated distribution $r^*$ now fully defined, we can substitute it back into the expression for $\Delta_{L_1}$ to analyze which properties of $p$ lead to the largest training benefit. Our simulation results are illustrated in Figure~\ref{fig:l1_reduction_vs_variance}, which shows how the $L_1$-norm distance reduction $\Delta_{L_1}$ varies with respect to the target distribution, $\sigma_p^2$. Within our tested range of variances, we observe a clear trend: for a given mean $\mu_p$, the reduction $\Delta_{L_1}$ tends to increase as $\sigma_p^2$ grows. This suggests that tokens whose target distributions have higher variance are likely to yield the greatest reduction in the $L_1$-norm during training. The empirical results from our simulation reveal a clear relationship: tokens with larger variance $\sigma_p^2$ are more valuable to the training process.

This finding can also be explained by our formula. As shown in \Eqref{eq:sigma_r_star}, a larger target variance $\sigma_p^2$ directly increases the variance of the updated distribution $r^*$, ensuring a flatter target yields a flatter updated distribution. This shape alignment is crucial for maximizing the training contribution, $\Delta_{L_1}$ (\Eqref{eq:delta_l1}). The $L_1$-norm is highly sensitive to the misalignment of sharp probability peaks; since flat distributions lack such peaks, pointwise differences between them remain small, resulting in a smaller distance $\|p-r^*\|_1$. Assuming a constant initial distance $\|p-q\|_1$ (i.e., a fixed starting acceptance rate), minimizing $\|p-r^*\|_1$ maximizes the training contribution $\Delta_{L_1}$ (\Eqref{eq:delta_l1}).

\textbf{Key insights.} This motivates our key theoretical insight: not all tokens are equally important; those with flatter target distributions are more valuable for training. And we can use the variance of the Gaussian distribution as a measure of token importance in SD. 

\textbf{Discrete perspective.} However, in the discrete, token-level probability distributions produced by practical LLMs, we cannot directly compute this continuous variance. To bridge this gap between continuous theory and discrete distributions, we require a useful metric that can serve as a proxy for variance. We propose using the \textbf{cosine similarity with the uniform distribution.} This choice is theoretically grounded; as detailed in Appendix~\ref{app:cosine_gaussian_derivation}, it can be shown that this metric is positively correlated with the variance of the corresponding Gaussian distribution in the continuous limit. Our simulations, presented in Figure~\ref{fig:discrepancy_vs_std}, further validate this crucial relationship, demonstrating that the cosine similarity indeed increases monotonically with the Gaussian standard deviation. This positive correlation is the crucial link between our continuous theory and discrete application. It validates that cosine similarity to a uniform distribution can serve as a practical and computable proxy for quantifying a token's training importance in the discrete setting.

\section{The proposed approach}
\label{sec:validation-and-application}
\subsection{Empirical validation of the theoretical analysis}
\label{subsec:verifying-the-metric}

\begin{figure}[ht]
    \centering
    \vspace{-5mm}
    \subfloat[flatness($p$)]{
        \includegraphics[width=0.48\textwidth]{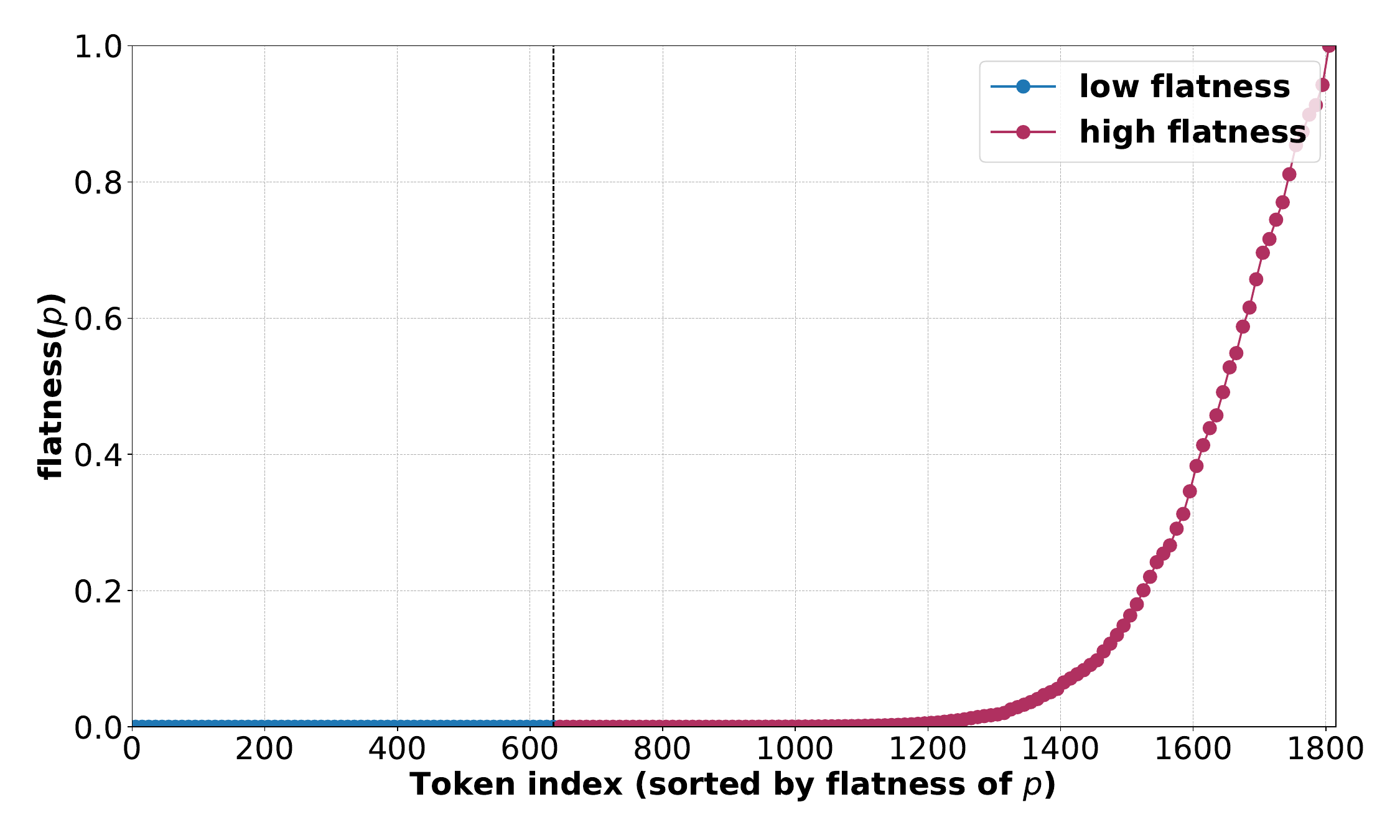}
        \label{fig:2a_flatness_p}
    }\hfill
    \subfloat[Change in flatness($q_{m}$)]{
        \includegraphics[width=0.48\textwidth]{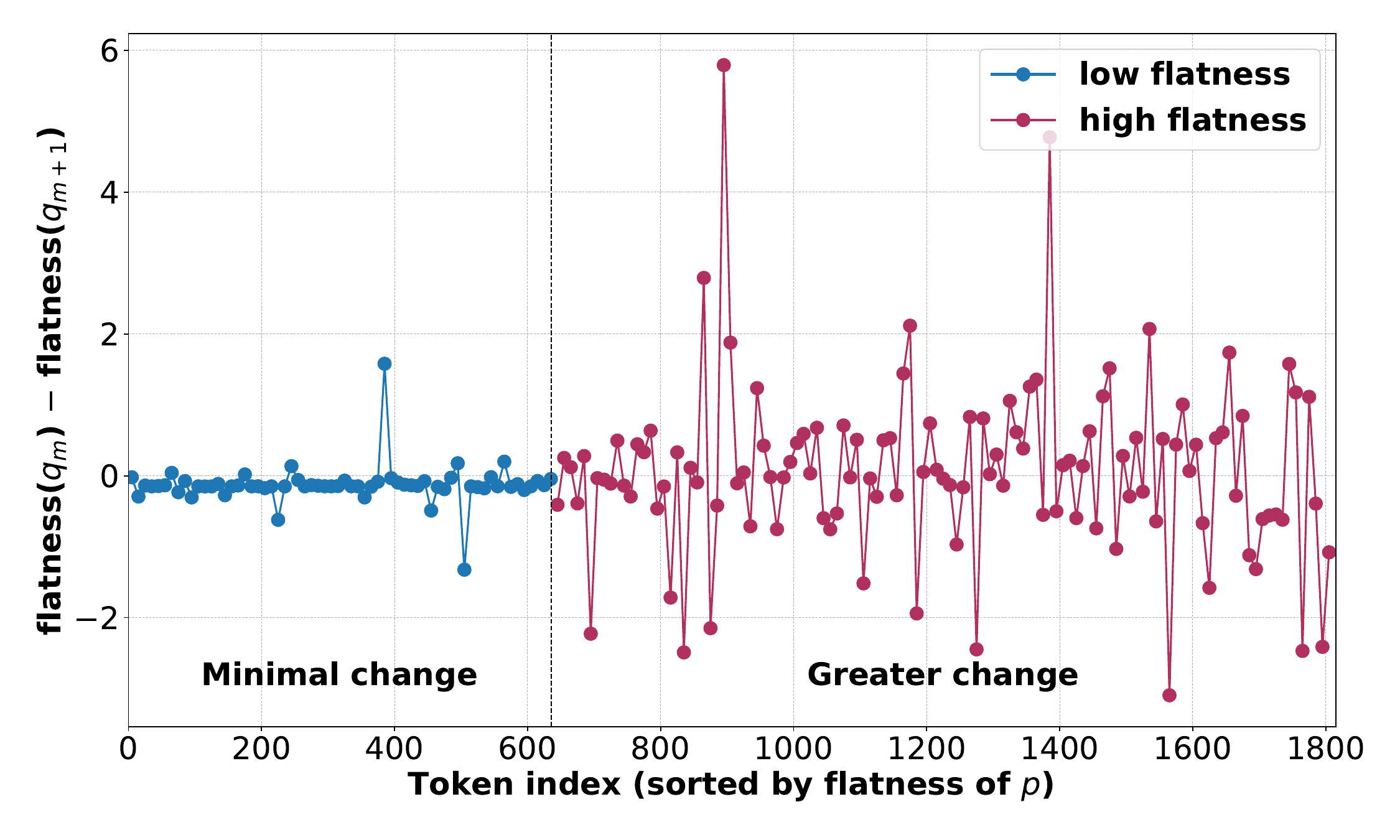}
        \label{fig:2b_delta_flatness_p}
    }\vfill
    \subfloat[$\Delta_{L_1}$]{
        \includegraphics[width=0.48\textwidth]{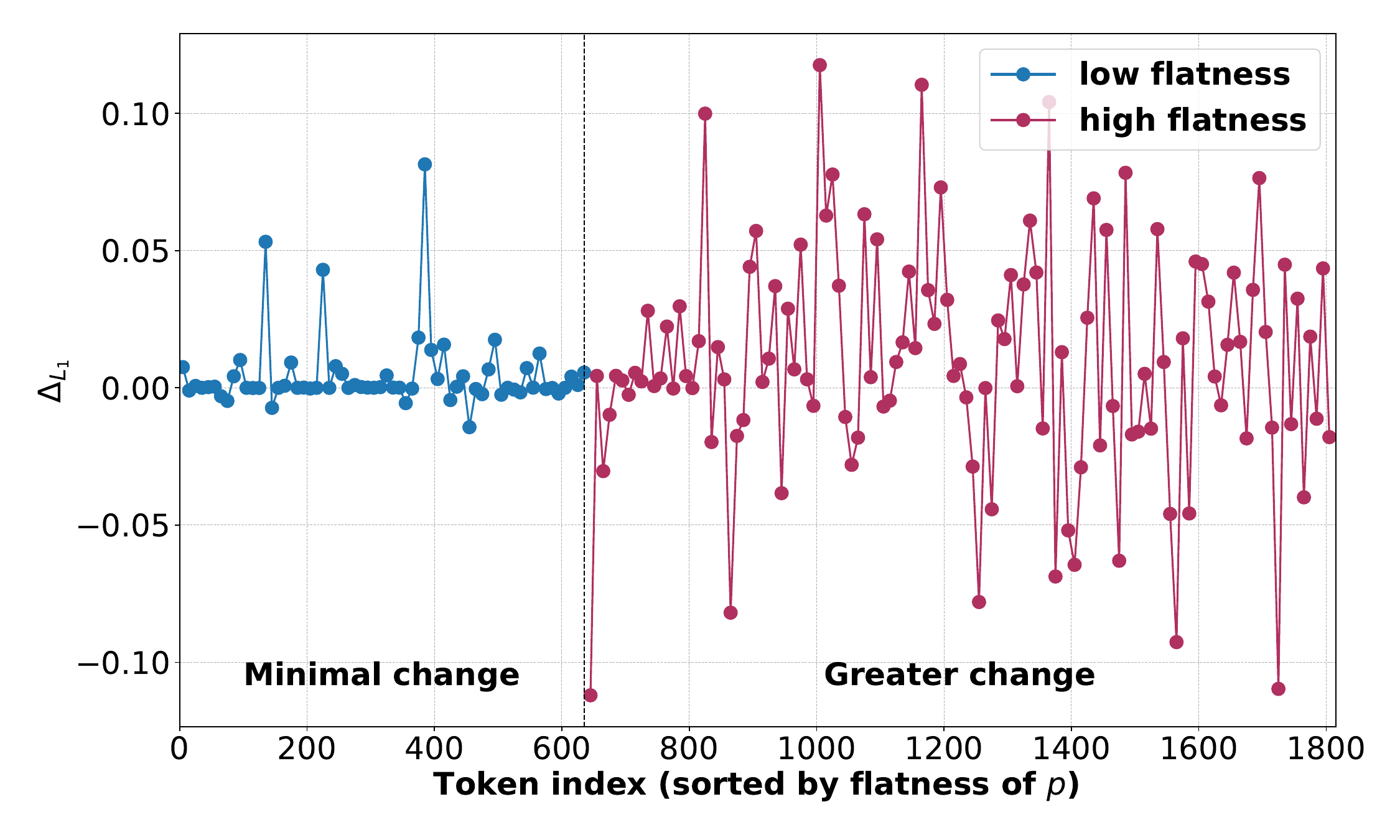}
        \label{fig:2c_delta_l1_p}
    }\hfill
    \subfloat[Entropy-vs-flatness filtering gap]{
        \includegraphics[width=0.48\textwidth]{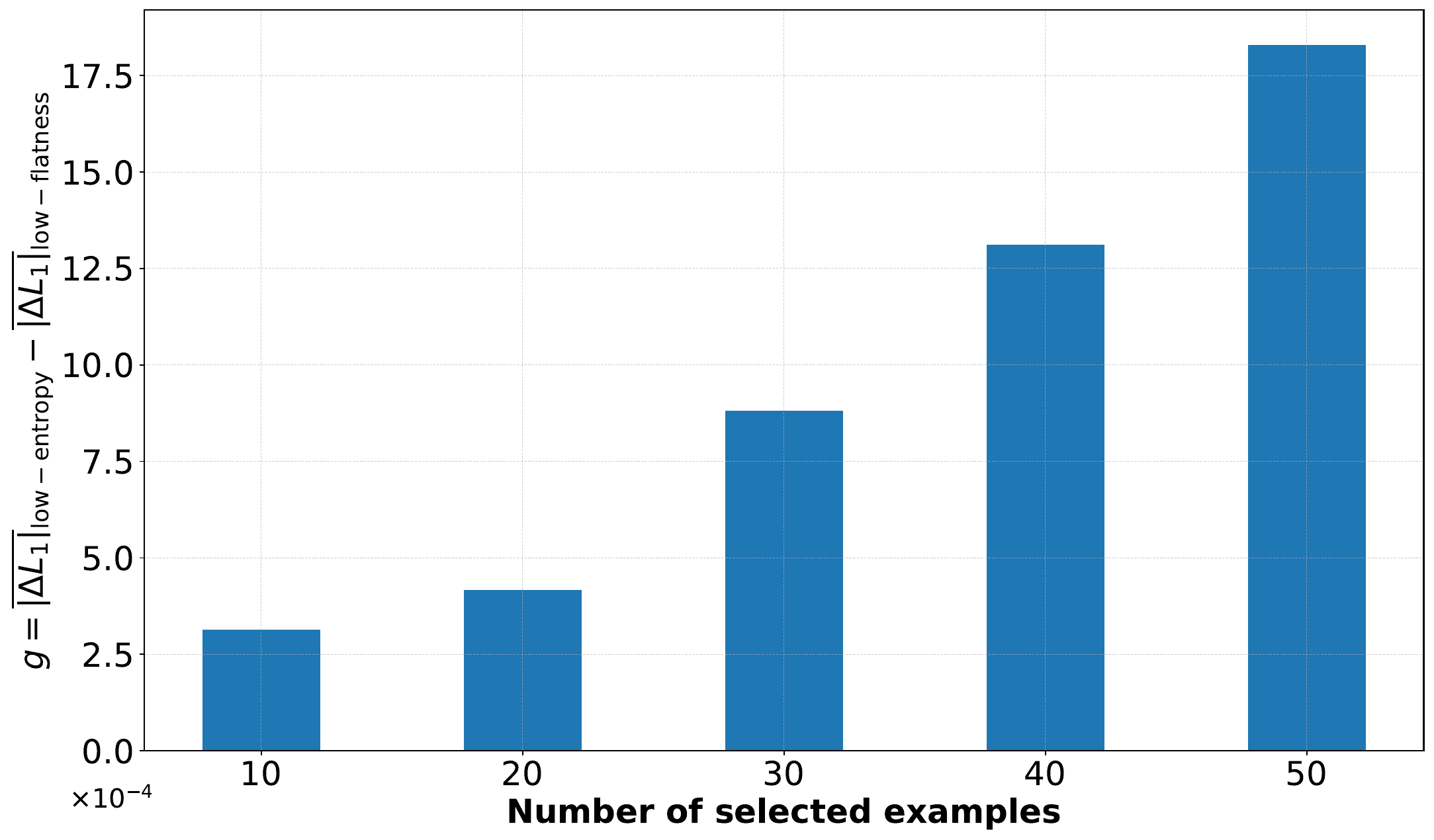}
        \label{fig:2d_delta_l1_entropy_vs_flatness}
    }

    \caption{\textbf{Target-sorted flatness view.}
    Tokens sorted by the target’s statistic (low$\rightarrow$high flatness).
    (a) flatness values; (b) one-epoch change in flatness; (c) the one-epoch reduction in the $L_1$ discrepancy, $\Delta_{L_1}$.
    Curve coloring distinguishes token groups by target flatness: the \textbf{blue} segment represents tokens with low flatness, while the \textbf{red} segment represents those with high flatness.
    Panels (b,c) additionally annotate \textbf{Minimal change} (left; indicating that the vast majority of points in this segment exhibit small changes) and \textbf{Greater change} (right; indicating that the vast majority of points in this segment exhibit larger changes).
    \textcolor{myblue}{
    (d) Entropy-vs-flatness filtering gap: for each metric, we rank tokens by that metric, take the bottom 35\% tokens, and compute their one-epoch average $|\Delta_{L_1}|$; bars plot the difference between the entropy-based and flatness-based bottom-35\% averages under different numbers of selected examples.
    Flatness is as defined in \Eqref{eq:token-flatness-def}.}}

    \label{fig:target_sorted_view_three}
\end{figure}

\textbf{Flatness definition.} In the previous section, our one-step KD analysis reveals that tokens with flatter target distributions should offer greater acceptance-linked headroom; however, those results rest on idealized assumptions. Therefore, an empirical validation is essential to bridge the gap between our theory and real-world application. So now we test this insight on real LLMs. We define  \textit{\texttt{flatness}} of a token $t$ as the cosine similarity between its distribution and the uniform distribution:
\begin{equation}
\label{eq:token-flatness-def}
\textit{\texttt{flatness}}(t) := \text{cos}(p_t, U) = \frac{p_t \cdot U}{\|p_t\|_2 \|U\|_2},
\end{equation}
where $p_t$ is the token's distribution, $U$ is the uniform distribution over the vocabulary of size $V$, and $\|\cdot\|_2$ denotes the Euclidean ($L_2$) norm.

\textbf{Empirical validation via training dynamics.} From the definition, a higher flatness denotes a more uniform distribution, and a lower flatness indicates a more concentrated distribution. When we refer to the target flatness, we use the target distribution $p$ as $p_t$ in the equation. To validate whether target flatness can effectively serve as a metric for a token's training potential, we first need a metric to quantify its actual contribution to the learning process. Throughout our analysis, we define this contribution as the epoch-to-epoch reduction in the $L_1$ discrepancy, i.e.,
\begin{equation}
\Delta_{L_1} = \left\| p - q_{m} \right\|_1 - \left\| p - q_{m+1} \right\|_1,
\end{equation}
where $m$ and $m+1$ denote consecutive training epochs. We follow the EAGLE-2 framework~\citep{li2024eagle-2} with LLaMA3-8B-Instruct~\citep{meta2024llama3} as target model, trained on filtered ShareGPT dataset~\citep{sharegpt2023}. And we randomly select 10 samples for detailed inspection.

To relate the evaluated metric to training progression, we first sort tokens by the target flatness value in ascending order (low to high). The resulting curves are then smoothed using a 10-point moving average to obtain more stable values, thereby enhancing readability. As shown in Figure~\ref{fig:target_sorted_view_three}, we plot three key metrics. The x-axis for all subplots represents tokens sorted in ascending order of the target flatness, \textit{\texttt{flatness$(p)$}}. To analyze the draft model's behavior, we apply this flatness metric to its distribution $q$, which we called draft flatness. The plots then show: (a) the target flatness value itself; (b) the one-epoch change in the draft model's flatness, \textit{\texttt{flatness$(q_m)$}}; and (c) the corresponding one-epoch reduction in the $L_1$ discrepancy, $\Delta_{L_1}$.

Collectively, our empirical results substantiate the following observations, sorting tokens according to the target flatness from low to high, as illustrated in the target-sorted view in Figure~\ref{fig:target_sorted_view_three}:

\textbf{(i)} In the \emph{low target flatness} region (blue segment in Panel~(a)), the one-epoch change in draft flatness is small (Panel~(b), left; annotated “Minimal change”), and the acceptance-linked discrepancy likewise shows slight movement (Panel~(c), left). Thus, tokens with low target flatness exhibit \emph{limited} one-epoch movement in both draft statistics and $\Delta_{L_1}$.

\textbf{(ii)} In the \emph{high target flatness} region (red segment in Panel~(a)), draft flatness varies more over one epoch (Panel~(b), right; annotated “Greater change”), and the magnitude of the corresponding change in $\Delta_{L_1}$ is also larger (Panel~(c), right). Hence, tokens with high target flatness are precisely where we observe pronounced acceptance-linked movement during training. 

These findings affirm that flatness effectively signals available headroom: tokens with low target flatness contribute minimally to training improvements, whereas tokens with high target flatness exhibit learnable dynamics and significant changes in $\Delta_{L_1}$. We thus adopt target flatness (\textit{\texttt{flatness}}$(p)$) as the principal ranking criterion for data selection. 

At first glance, target flatness-based selection might seem counterintuitive or risky: interpreting low target flatness as indicative of high target certainty (a strong label signal) could imply inadvertently excluding valuable tokens. In practice, however, such low target flatness tokens either (i) already closely align or will rapidly align with the target distribution, rendering subsequent updates negligible in terms of reducing $\Delta_{L_1}$ in later training, or (ii) remain confidently misaligned, thus providing minimal per-step gradient information and potentially contributing negatively when averaged across multiple tokens. Consequently, prioritizing tokens with higher target flatness focuses computational resources precisely where meaningful, acceptance-linked improvements can be realized.



{\color{myblue}
\textbf{Comparison with other metrics.}
We further compare target flatness with other commonly used distribution-dispersion metrics, such as target entropy. The results show a similar trend, details are provided in Appendix~\ref{app:entropy_analysis}.

More importantly, we find that flatness provides more effective filtering than entropy. We randomly sample $N \in \{10, 20, 30, 40, 50\}$ training examples. For each metric (entropy or flatness), we rank all tokens by that metric and take the bottom 35\% as low-score tokens. On the low-entropy and low-flatness tokens of each metric, we compute the average $|\Delta_{L_1}|$ between consecutive training epochs, denoted as $\overline{\lvert \Delta_{L_1} \rvert}_{\text{low-entropy}}$ and $\overline{\lvert \Delta_{L_1} \rvert}_{\text{low-flatness}}$, for their remaining impact on SD in late training. We then report the gap between flatness and entropy as $g = \overline{\lvert \Delta_{L_1} \rvert}_{\text{low-entropy}} - \overline{\lvert \Delta_{L_1} \rvert}_{\text{low-flatness}}$.

As shown in Figure~\ref{fig:2d_delta_l1_entropy_vs_flatness}, we observe $g > 0$ consistently holds. Furthermore, the gap increases as $N$ grows. This indicates that, under the same retain ratio, flatness-based filtering removes more already-saturated tokens (with smaller $|\Delta_{L_1}|$). It gives a quantitative explanation of why flatness is a better metric than entropy in our data selection experiments: flatness is more effective at filtering out low-quality tokens that offer minimal training value, leading to higher training efficiency.
}


\begin{figure}[t]
    \centering
    \vspace{-3mm}
    \includegraphics[width=\textwidth]{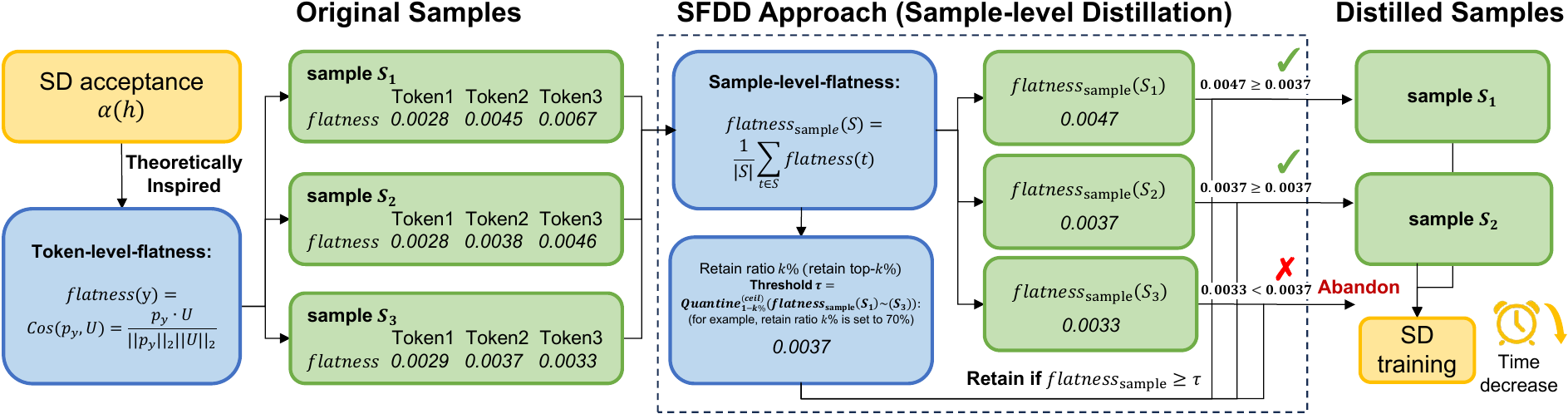} 
    \caption{
    \textbf{The SFDD workflow:} This approach calculates $\textit{\texttt{flatness}}_{\mathrm{sample}}$ by averaging token flatness within each sample, and then uses quantile-derived threshold to select the top-$k$\% and filter the dataset for training. The figure illustrates this with a concrete example using 70\% retain ratio.
}

\label{fig:sfs-distill-flowchart}
\end{figure}

\subsection{From token-level insight to sample-level data selection}
\label{subsec:token-to-sample}

The successful validation of this token-level insight sheds light on a significant inefficiency in current training-based SD methods. Prominent approaches, such as the EAGLE series, rely on full-data training over large datasets. They treat all data samples as equally important, thereby expending considerable computational resources on samples predominantly composed of low-value, concentrated tokens, while these tokens contribute negligibly to training. This motivates us to advocate for a more efficient paradigm: extending our validated token-level insight to a practical, sample-level data selection strategy. The goal is to filter out entire samples that are unlikely to contribute significantly to the training process, thereby accelerating training without heavily sacrificing performance.

To this end, we introduce the sample-level-flatness to quantify a sample's overall training value. The sample-level-flatness for a sample $S$ is defined as the average of the flatness of its constituent tokens:
\begin{equation}
\label{eq:sample-score-def}
\textit{\texttt{flatness}}_{\mathrm{sample}} (S) = \frac{1}{|S|} \sum\nolimits_{t \in S} \textit{\texttt{flatness}}(t),
\end{equation}
where the token-level $\textit{\texttt{flatness}}(t)$ is calculated using the target distribution $p$. A higher $\textit{\texttt{flatness}}_{\mathrm{sample}}$ signifies that the sample, as a whole, offers greater potential training value.

\subsection{Sample-level-flatness-based dataset distillation}
\label{subsec:distillation-algorithm}

Building on our validated sample-level flatness metric, we introduce a simple yet effective approach for dataset distillation. This approach curates a smaller, more efficient dataset for SD training by retaining only samples with high flatness. The overall workflow, which we term \emph{Sample-level-flatness-based Dataset Distillation} (SFDD), is illustrated in Figure~\ref{fig:sfs-distill-flowchart}. Given a retain ratio of $k\%$, the procedure is to first compute sample-level flatness for each sample by averaging the flatness of its constituent tokens. A threshold $\tau$ is then set as the ceiling of the $(1-k)\%$ quantile of these scores. The distilled dataset is formed by retaining all samples with $\textit{\texttt{flatness}}_{\mathrm{sample}} \ge \tau$.



\section{Experiments}
\label{sec:experiment}

In the previous sections, our theoretical and empirical findings have shown that a token's importance correlates with the \textit{\texttt{flatness}} of its target distribution. In this section, we extensively evaluate our approach, Sample-level-flatness-based Dataset Distillation (SFDD), against various baselines across different datasets. We aim to: (1) demonstrate the superiority of SFDD by benchmarking it against various common importance metrics at a fixed data retain ratio; (2) compare SFDD against a random filtering baseline across different data retain ratios to demonstrate the effectiveness of our method; and (3) quantify the training efficiency gains provided by our data selection approach.

\subsection{Experimental setup}
\label{subsec:setup}

\paragraph{Models and baselines.} 
Our experiments use the EAGLE-2 training pipeline~\citep{li2024eagle-2} with LLaMA3-8B-Instruct~\citep{meta2024llama3} as the target model. Our approach is compared against two main baselines: training on the full dataset (``No Filter'') and a naive ``Random Filtering'' strategy. In our main results, we also benchmark against several other common token-importance metrics, including entropy~\citep{Li2021DynamicKD}, Top-1 Probability~\citep{hendrycks2017baseline}, 
the margin between the top two probabilities \((p_{(1)}-p_{(2)})\)~\citep{tong2001svmactive,bahri2023margin}, 
Energy Score~\citep{liu2020energy}, 
and perplexity (PPL)~\citep{chen1999empirical,meister2021beyond}. For these metrics, we select samples with high entropy, low top-1 probability, small margin, larger (less-confident) Energy Score, or higher PPL, as these criteria help identify samples that are valuable for training rather than those the model has already converged on.

\paragraph{Dataset and tasks.} 
We use the ShareGPT dataset~\citep{sharegpt2023} for training. Evaluation is performed on five diverse downstream tasks: GSM8K~\citep{cobbe2021gsm8k}, Alpaca~\citep{taori2023alpaca}, MT-Bench (MTB)~\citep{zheng2023mtbench}, CNN/DM~\citep{see2017pointer}, and Natural Questions (NQ)~\citep{kwiatkowski2019natural}. All experiments use NVIDIA H800 GPUs, a decoding temperature of 1.0, and a draft generating step of $\gamma=5$; see Appendix~\ref{app:temp_zero_results} for temperature 0 results.


\paragraph{Evaluation metrics.}
We use three primary metrics:
(1)~\textbf{Speedup}: The wall-clock time of standard autoregressive decoding divided by that of speculative decoding. Higher is better. (2)~\textbf{Average acceptance length ($l$)}: The average number of draft tokens accepted per verification cycle. The average acceptance rate is exactly $l/\gamma$, but since the rate is often very small, we instead report $l$, which makes variations more visually discernible. Higher is better.
(3)~\textbf{Training time}: The total wall-clock time (in seconds) for training, used to measure efficiency. All reported times include data-selection overhead. More details can be found in Appendix~\ref{app:timing-setup}.


\subsection{Main results}
\label{subsec:main_results}

We fix the data retain ratio at 50\% for our main comparison. This ratio is chosen because our preliminary experiments show that the random filtering baseline performs near its peak at this level, ensuring a fair comparison against a strong baseline. We compare our method against several common metrics that, similar to our approach, measure data importance, reporting both inference speedup and average generation length. As shown in Table~\ref{tab:main_results}, our SFDD method consistently achieves higher speedup and average generation length than all other metrics across every downstream task. Notably, SFDD achieves an average speedup of 2.41$\times$, which is significantly higher than the next best method (Top-1 Probability at 2.23$\times$). Furthermore, with only 50\% of the data, our method exhibits the smallest performance gap compared to the full-dataset baseline, achieving an average speedup that is \textbf{within 4\% of the ``No Filter'' speedup (2.41$\times$ vs. 2.49$\times$).}

\begin{table*}[ht]
\centering
\caption{Comprehensive comparison of various metrics for data importance at a 50\% retain ratio.}
\label{tab:main_results}
\resizebox{\textwidth}{!}{%
\begin{tabular}{l|cc|cc|cc|cc|cc|cc}
\toprule
& \multicolumn{2}{c|}{\textbf{GSM8K}} & \multicolumn{2}{c|}{\textbf{Alpaca}} & \multicolumn{2}{c|}{\textbf{MTB}} & \multicolumn{2}{c|}{\textbf{CNN/DM}} & \multicolumn{2}{c|}{\textbf{NQ}} & \multicolumn{2}{c}{\textbf{Average}} \\
\cmidrule(lr){2-3} \cmidrule(lr){4-5} \cmidrule(lr){6-7} \cmidrule(lr){8-9} \cmidrule(lr){10-11} \cmidrule(lr){12-13}
Method & Speedup & $l$ & Speedup & $l$ & Speedup & $l$ & Speedup & $l$ & Speedup & $l$ & Speedup & $l$ \\
\midrule
No Filter & 2.71$\times$ & 3.28 & 2.71$\times$ & 2.89 & 2.53$\times$ & 2.77 & 2.30$\times$ & 2.58 & 2.19$\times$ & 2.37 & 2.49$\times$ & 2.78 \\
\midrule
Random & 2.43$\times$ & 2.85 & 2.37$\times$ & 2.59 & 2.26$\times$ & 2.48 & 1.99$\times$ & 2.31 & 1.93$\times$ & 2.06 & 2.20$\times$ & 2.46 \\
Entropy & 2.43$\times$ & 2.85 & 2.43$\times$ & 2.64 & 2.20$\times$ & 2.51 & 1.95$\times$ & 2.32 & 1.98$\times$ & 2.12 & 2.20$\times$ & 2.49 \\
Top-1 Probability & 2.49$\times$ & 2.84 & 2.44$\times$ & 2.66 & 2.26$\times$ & 2.53 & 1.99$\times$ & 2.32 & 1.98$\times$ & 2.12 & 2.23$\times$ & 2.49 \\
Margin & 2.45$\times$ & 2.85 & 2.35$\times$ & 2.48 & 2.19$\times$ & 2.42 & 1.92$\times$ & 2.27 & 1.85$\times$ & 1.99 & 2.15$\times$ & 2.40 \\
Energy Score & 2.49$\times$ & 2.87 & 2.44$\times$ & 2.64 & 2.19$\times$ & 2.50 & 1.99$\times$ & 2.33 & 1.91$\times$ & 2.12 & 2.21$\times$ & 2.49 \\
PPL & 2.36$\times$ & 2.79 & 2.45$\times$ & 2.65 & 2.21$\times$ & 2.50 & 2.01$\times$ & 2.33 & 1.95$\times$ & 2.13 & 2.20$\times$ & 2.48 \\
\textbf{SFDD (Ours)} & \textbf{2.69}$\times$ & \textbf{2.95} & \textbf{2.66}$\times$ & \textbf{2.71} & \textbf{2.44}$\times$ & \textbf{2.60} & \textbf{2.14}$\times$ & \textbf{2.38} & \textbf{2.14}$\times$ & \textbf{2.17} & \textbf{2.41}$\times$ & \textbf{2.56} \\
\bottomrule
\end{tabular}%
}
\vspace{-1em}
\end{table*}

\subsection{Ablation study}
\label{subsec:ablation}

We conduct an ablation study, with results in Table~\ref{tab:ablation_studies}, to ablate two key factors. The setup allows us to simultaneously ablate the contribution of our selection metric (by contrasting SFDD with Random Filtering \textcolor{myblue}{and Top-1 Probability, which is the second-best metric in Table~\ref{tab:main_results}}), ablate the impact of the retain ratio and control for the specificity of the data chosen by random filtering (by observing the trend across different retain ratios), thereby confirming that our method's advantage is robust and not coincidental. The results in Table~\ref{tab:ablation_studies} demonstrate two key points. First, SFDD surpasses \textcolor{myblue}{both} the random baseline \textcolor{myblue}{and Top-1 Probability} by a large margin across all retain ratios, confirming the effectiveness of our flatness-based scoring metric. Second, a significant speedup gap between SFDD and the \textcolor{myblue}{baselines} persists even at low retain ratios, highlighting the effectiveness of our method. Impressively, with 70\% of the data, our method's speedup is nearly identical to the ``No Filter'' baseline, and on certain datasets like Alpaca, it even surpasses the full-dataset speedup (2.77$\times$ vs. 2.71$\times$), suggesting that filtering can sometimes remove noisy or redundant data.


\begin{table*}[ht]
\centering
\caption{Ablation study on different retain ratios, comparing SFDD against different baselines.}
\label{tab:ablation_studies}
\resizebox{\textwidth}{!}{%
\begin{tabular}{ll|cc|cc|cc|cc|cc|cc}
\toprule
& & \multicolumn{2}{c|}{\textbf{GSM8K}} & \multicolumn{2}{c|}{\textbf{Alpaca}} & \multicolumn{2}{c|}{\textbf{MTB}} & \multicolumn{2}{c|}{\textbf{CNN/DM}} & \multicolumn{2}{c|}{\textbf{NQ}} & \multicolumn{2}{c}{\textbf{Average}} \\
\cmidrule(lr){3-4} \cmidrule(lr){5-6} \cmidrule(lr){7-8} \cmidrule(lr){9-10} \cmidrule(lr){11-12} \cmidrule(lr){13-14}
Retain Ratio & Method & Speedup & $l$ & Speedup & $l$ & Speedup & $l$ & Speedup & $l$ & Speedup & $l$ & Speedup & $l$ \\
\midrule
100\% & No Filter & 2.71$\times$ & 3.28 & 2.71$\times$ & 2.89 & 2.53$\times$ & 2.77 & 2.30$\times$ & 2.58 & 2.19$\times$ & 2.37 & 2.49$\times$ & 2.78 \\
\midrule
\multirow{3}{*}{70\%}
& Random & 2.43$\times$ & 2.84 & 2.41$\times$ & 2.55 & 2.24$\times$ & 2.46 & 1.98$\times$ & 2.30 & 1.91$\times$ & 2.06 & 2.19$\times$ & 2.44 \\
& \textcolor{myblue}{Top-1 Probability} & \textcolor{myblue}{2.62$\times$} & \textcolor{myblue}{2.89} & \textcolor{myblue}{2.61$\times$} & \textcolor{myblue}{2.70} & \textcolor{myblue}{2.34$\times$} & \textcolor{myblue}{2.50} & \textcolor{myblue}{2.07$\times$} & \textcolor{myblue}{2.37} & \textcolor{myblue}{2.09$\times$} & \textcolor{myblue}{2.14} & \textcolor{myblue}{2.35$\times$} & \textcolor{myblue}{2.52} \\
& \textbf{SFDD (Ours)} & \textbf{2.71$\times$} & \textbf{2.95} & \textbf{2.77$\times$} & \textbf{2.77} & \textbf{2.41$\times$} & \textbf{2.58} & \textbf{2.19$\times$} & \textbf{2.40} & \textbf{2.14$\times$} & \textbf{2.19} & \textbf{2.44$\times$} & \textbf{2.58} \\
\midrule
\multirow{3}{*}{60\%}
& Random & 2.42$\times$ & 2.89 & 2.38$\times$ & 2.59 & 2.22$\times$ & 2.49 & 2.02$\times$ & 2.32 & 1.95$\times$ & 2.06 & 2.20$\times$ & 2.47 \\
& \textcolor{myblue}{Top-1 Probability} & \textcolor{myblue}{2.54$\times$} & \textcolor{myblue}{2.92} & \textcolor{myblue}{2.55$\times$} & \textcolor{myblue}{2.67} & \textcolor{myblue}{2.35$\times$} & \textcolor{myblue}{2.51} & \textcolor{myblue}{2.07$\times$} & \textcolor{myblue}{2.37} & \textcolor{myblue}{2.09$\times$} & \textcolor{myblue}{2.14} & \textcolor{myblue}{2.32$\times$} & \textcolor{myblue}{2.52} \\
& \textbf{SFDD (Ours)} & \textbf{2.55$\times$} & \textbf{2.95} & \textbf{2.71$\times$} & \textbf{2.72} & \textbf{2.40$\times$} & \textbf{2.57} & \textbf{2.15$\times$} & \textbf{2.40} & \textbf{2.13$\times$} & \textbf{2.15} & \textbf{2.39$\times$} & \textbf{2.56} \\
\midrule
\multirow{3}{*}{50\%}
& Random & 2.43$\times$ & 2.85 & 2.37$\times$ & 2.59 & 2.26$\times$ & 2.48 & 1.99$\times$ & 2.31 & 1.93$\times$ & 2.06 & 2.20$\times$ & 2.46 \\
& \textcolor{myblue}{Top-1 Probability} & \textcolor{myblue}{2.49$\times$} & \textcolor{myblue}{2.84} & \textcolor{myblue}{2.44$\times$} & \textcolor{myblue}{2.66} & \textcolor{myblue}{2.26$\times$} & \textcolor{myblue}{2.53} & \textcolor{myblue}{1.99$\times$} & \textcolor{myblue}{2.32} & \textcolor{myblue}{1.98$\times$} & \textcolor{myblue}{2.12} & \textcolor{myblue}{2.23$\times$} & \textcolor{myblue}{2.49} \\
& \textbf{SFDD (Ours)} & \textbf{2.69$\times$} & \textbf{2.95} & \textbf{2.66$\times$} & \textbf{2.71} & \textbf{2.44$\times$} & \textbf{2.60} & \textbf{2.14$\times$} & \textbf{2.38} & \textbf{2.14$\times$} & \textbf{2.17} & \textbf{2.41$\times$} & \textbf{2.56} \\
\midrule
\multirow{3}{*}{40\%}
& Random & 2.47$\times$ & 2.85 & 2.39$\times$ & 2.57 & 2.24$\times$ & 2.46 & 1.99$\times$ & 2.31 & 1.87$\times$ & 2.06 & 2.19$\times$ & 2.45 \\
& \textcolor{myblue}{Top-1 Probability} & \textcolor{myblue}{2.41$\times$} & \textcolor{myblue}{2.84} & \textcolor{myblue}{2.44$\times$} & \textcolor{myblue}{2.63} & \textcolor{myblue}{2.26$\times$} & \textcolor{myblue}{2.50} & \textcolor{myblue}{1.97$\times$} & \textcolor{myblue}{2.30} & \textcolor{myblue}{1.93$\times$} & \textcolor{myblue}{2.08} & \textcolor{myblue}{2.20$\times$} & \textcolor{myblue}{2.47} \\
& \textbf{SFDD (Ours)} & \textbf{2.66$\times$} & \textbf{2.94} & \textbf{2.63$\times$} & \textbf{2.73} & \textbf{2.40$\times$} & \textbf{2.56} & \textbf{2.13$\times$} & \textbf{2.36} & \textbf{2.12$\times$} & \textbf{2.14} & \textbf{2.39$\times$} & \textbf{2.55} \\
\midrule
\multirow{3}{*}{30\%}
& Random & 2.31$\times$ & 2.79 & 2.33$\times$ & 2.55 & 2.19$\times$ & 2.41 & 1.99$\times$ & 2.23 & 1.88$\times$ & 2.01 & 2.14$\times$ & 2.40 \\
& \textcolor{myblue}{Top-1 Probability} & \textcolor{myblue}{2.40$\times$} & \textcolor{myblue}{2.83} & \textcolor{myblue}{2.40$\times$} & \textcolor{myblue}{2.62} & \textcolor{myblue}{2.23$\times$} & \textcolor{myblue}{2.43} & \textcolor{myblue}{1.95$\times$} & \textcolor{myblue}{2.27} & \textcolor{myblue}{1.92$\times$} & \textcolor{myblue}{2.07} & \textcolor{myblue}{2.18$\times$} & \textcolor{myblue}{2.44} \\
& \textbf{SFDD (Ours)} & \textbf{2.51$\times$} & \textbf{2.88} & \textbf{2.60$\times$} & \textbf{2.62} & \textbf{2.37$\times$} & \textbf{2.50} & \textbf{2.17$\times$} & \textbf{2.33} & \textbf{2.01$\times$} & \textbf{2.10} & \textbf{2.33$\times$} & \textbf{2.49} \\
\bottomrule
\end{tabular}}
\vspace{-1em}
\end{table*}

{\color{myblue}


To investigate the robustness of our method under highly resource-constrained scenarios, we extend our evaluation to include extreme retain ratios of 5\%, 10\%, and 20\%. We compare SFDD against Random. As shown in Table~\ref{tab:extreme_retain_ratios}, we observe that while both methods experience a performance drop at these very low retain ratios, SFDD consistently outperforms Random filtering across all datasets in terms of both inference speedup and average acceptance length ($\ell$).  This indicates that flatness remains a robust and effective indicator of token importance under extreme data reduction regimes.

}

\begin{table}[h]
\captionsetup{labelfont={color=myblue}}
\centering
\small
\caption{\textcolor{myblue}{Ablation study at extreme retain ratios, comparing SFDD against Random filtering.}}
\label{tab:extreme_retain_ratios}
\resizebox{\textwidth}{!}{
\color{myblue}\begin{tabular}{ll|cc|cc|cc|cc|cc|cc}
\toprule
\multicolumn{2}{c|}{Retain ratio} &
\multicolumn{2}{c|}{GSM8K} &
\multicolumn{2}{c|}{Alpaca} &
\multicolumn{2}{c|}{MTB} &
\multicolumn{2}{c|}{CNN/DM} &
\multicolumn{2}{c|}{NQ} &
\multicolumn{2}{c}{Average} \\
\cmidrule(lr){1-14}
Retain & Method &
Speedup & $\ell$ &
Speedup & $\ell$ &
Speedup & $\ell$ &
Speedup & $\ell$ &
Speedup & $\ell$ &
Speedup & $\ell$ \\
\midrule
\multirow{2}{*}{5\%} & Random      & 1.75$\times$ & 1.96 & 2.00$\times$ & 1.92 & 1.60$\times$ & 1.73 & 1.48$\times$ & 1.45 & 1.57$\times$ & 1.49 & 1.68$\times$ & 1.71 \\
                     & \textbf{SFDD (Ours)} & \textbf{2.03$\times$} & \textbf{2.05} & \textbf{2.09$\times$} & \textbf{1.99} & \textbf{1.81$\times$} & \textbf{1.81} & \textbf{1.54$\times$} & \textbf{1.55} & \textbf{1.66$\times$} & \textbf{1.54} & \textbf{1.82$\times$} & \textbf{1.79} \\
\midrule
\multirow{2}{*}{10\%} & Random      & 2.25$\times$ & 2.49 & 2.21$\times$ & 2.24 & 2.04$\times$ & 2.09 & 1.80$\times$ & 1.84 & 1.73$\times$ & 1.76 & 2.01$\times$ & 2.08 \\
                      & \textbf{SFDD (Ours)} & \textbf{2.32$\times$} & \textbf{2.59} & \textbf{2.25$\times$} & \textbf{2.30} & \textbf{2.08$\times$} & \textbf{2.13} & \textbf{1.93$\times$} & \textbf{1.87} & \textbf{1.79$\times$} & \textbf{1.83} & \textbf{2.07$\times$} & \textbf{2.14} \\
\midrule
\multirow{2}{*}{20\%} & Random      & 2.27$\times$ & 2.72 & 2.34$\times$ & 2.43 & 2.08$\times$ & 2.35 & 1.83$\times$ & 2.14 & 1.81$\times$ & 1.90 & 2.06$\times$ & 2.31 \\
                      & \textbf{SFDD (Ours)} & \textbf{2.38$\times$} & \textbf{2.77} & \textbf{2.51$\times$} & \textbf{2.52} & \textbf{2.28$\times$} & \textbf{2.40} & \textbf{2.02$\times$} & \textbf{2.19} & \textbf{1.97$\times$} & \textbf{2.04} & \textbf{2.23$\times$} & \textbf{2.39} \\
\bottomrule
\end{tabular}}
\vspace{-1em}
\end{table}

\begin{wrapfigure}[12]{R}{0.49\textwidth} 
  \centering
  \vspace{-1em} 
  \includegraphics[width=\linewidth]{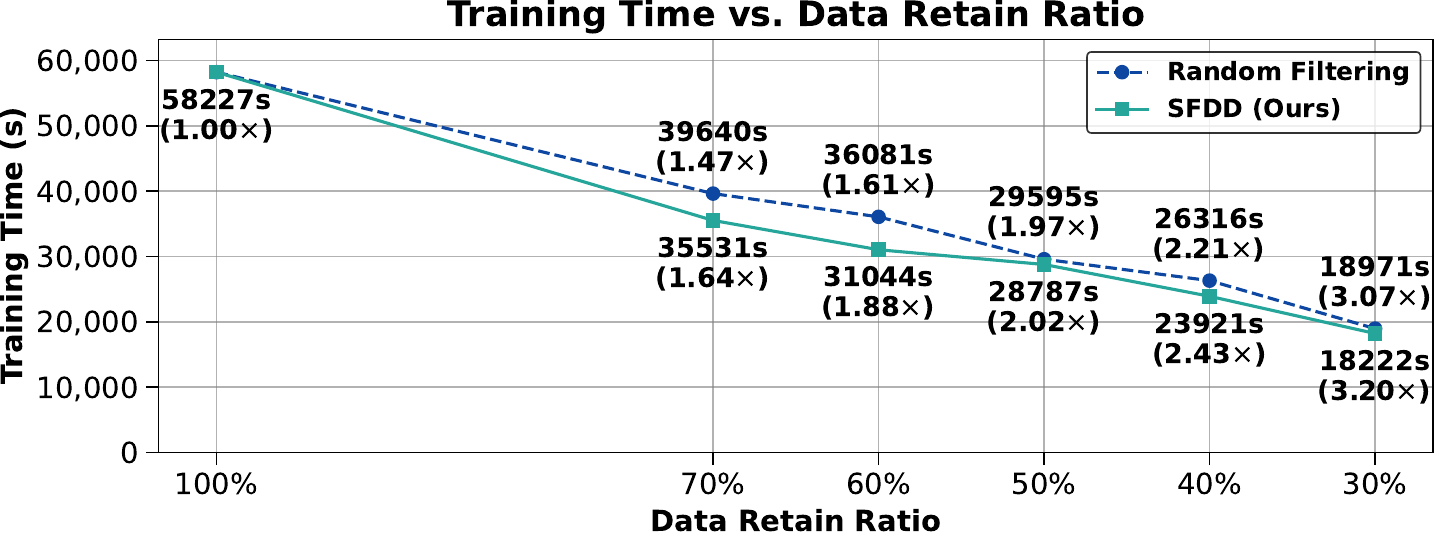} 
  \caption{Training time as a function of the data retain ratio (including data-selection time). Each point is annotated with the absolute wall-clock time and the corresponding training speedup.}
  \label{fig:training_efficiency}
\end{wrapfigure}

\subsection{Analysis of training efficiency}
\label{subsec:training_efficiency}

A primary motivation for data selection is to improve training efficiency. We therefore examine the training time. All reported times are end-to-end and include the one-off data-selection overhead (random filtering: $0.02$s; SFDD: $2242$s); see Appendix~\ref{app:timing-setup}. As illustrated in Figure~\ref{fig:training_efficiency}, training time scales approximately linearly with the data retain ratio, aligning with the intuition that training speedup is directly proportional to the data reduction rate. Meanwhile, even with a larger selection cost, the SFDD curve lies below the random-filtering curve across retain ratios, indicating an improvement in net training speed relative to random filtering. We hypothesize that this arises from enhanced batching efficiency when training on samples with more flat tokens. Moreover, for instance, at the 50\% retain ratio, our SFDD method reduces training time from $58{,}227$s (full dataset) to $28{,}787$s, achieving a \textbf{2.02$\times$ training speedup}, while the inference speedup decreases by less than 4\% (\Secref{subsec:main_results}). This finding underscores the potential of our approach to substantially reduce computational costs with minimal impact on the final model's SD inference time.

\section{conclusion}
In this work, we address the problem of inefficient draft model training in speculative decoding. We introduce flatness, a novel concept that identifies valuable training data by measuring the target model's predictive uncertainty. We propose a data-centric selection method SFDD, which uses this principle to curate smaller, more potent training datasets. This establishes a new paradigm that significantly enhances training efficiency while preserving the model's inference capabilities. Future work could generalize this approach to other architectures or explore dynamic selection strategies.

\newpage
\section*{Acknowledgement}
This work is supported by Jiangsu Province Carbon Peak Carbon Neutrality Science and Technology Innovation Special Fund Project (Grant No. BT2025029) and National Natural Science Foundation of China (62576091). Additional
support was provided by the Southeast University Big Data Computing Center and the Southeast University Kunpeng \& Ascend Center of Cultivation.

\section*{Ethics statement}
Our research is dedicated to advancing the computational efficiency of training algorithms for speculative decoding. All experiments are conducted using publicly available models and datasets, ensuring transparency and accessibility. We have carefully considered the potential impacts of our work and do not foresee any direct negative societal consequences or ethical concerns. Our methodology is designed to reduce the computational resources required for training large models, which we believe constitutes a positive contribution to the field by promoting more environmentally sustainable and accessible research.

\section*{Reproducibility statement}
To ensure full reproducibility, the source code for our method has been made publicly available at \url{https://github.com/fjm9933/Flatness}. Our experimental setup, including the specific models, datasets, is detailed in \Secref{subsec:setup}. Furthermore, complete theoretical derivations for our proposed approach are provided in Appendix~\ref{app:gaussian_derivation} and Appendix~\ref{app:cosine_gaussian_derivation}, allowing for thorough verification of our analytical results. We are committed to transparency and have provided all necessary components for the research community to replicate and build upon our findings.

\bibliography{iclr2026_conference}
\bibliographystyle{iclr2026_conference}

\appendix

\newpage

\appendix
\newpage

\appendix
\newpage

\appendix
\section{Derivation of the optimal Gaussian solution}
\label{app:gaussian_derivation}

In this section, we provide a detailed derivation for the parameters of the optimal distribution $r^*$ under the Gaussian assumption.

\begin{definition}[Problem Formulation]
Let $p = \mathcal{N}(\mu_p, \sigma_p^2)$ and $q = \mathcal{N}(\mu_q, \sigma_q^2)$ be the target and anchor distributions, respectively. We seek an optimal distribution $r^* = \mathcal{N}(\mu_r^*, \sigma_r^{2*})$ that solves the following constrained optimization problem:
\begin{align}
    \min_{r} \quad & D_{\mathrm{KL}}(p \| r) \\
    \text{subject to} \quad & D_{\mathrm{KL}}(r \| q) \le \theta
\end{align}
where $\theta \ge 0$ is a fixed budget. For the sake of simplicity, below we restrict the search space for $r$ to the family of Gaussian distributions.
\end{definition}

\begin{theorem}[Optimal Gaussian Parameters]
\textcolor{myblue}{There exists a one-parameter family of Gaussian candidates}
\begin{equation}
\textcolor{myblue}{r(\tau) := \mathcal{N}(\mu_r(\tau), \sigma_r^2(\tau)), \qquad \tau \in [0,1],}
\end{equation}
\textcolor{myblue}{and a unique parameter $\tau^* \in [0,1]$ such that the optimal solution $r^*$ to the problem above is given by}
\begin{equation}
\textcolor{myblue}{r^* = r(\tau^*) = \mathcal{N}(\mu_r^*, \sigma_r^{2*}).}
\end{equation}
For any $\tau\in[0,1]$, we have
\begin{align}
    \textcolor{myblue}{\mu_r(\tau)} &= (1-\tau)\,\mu_p + \tau\,\mu_q, \label{eq:mu_solution}\\
    \textcolor{myblue}{\sigma_r^2(\tau)} &= (1-\tau)\,\sigma_p^2 + \tau\,\sigma_q^2 + \tau^2(1-\tau)\,(\mu_p-\mu_q)^2. \label{eq:sigma_solution}
\end{align}
\textcolor{myblue}{In particular, $\mu_r^* = \mu_r(\tau^*)$ and $\sigma_r^{2*} = \sigma_r^2(\tau^*)$.}
Let $\Delta_\mu := \mu_p - \mu_q$ and define
\begin{equation}
g(\tau):=D_{\mathrm{KL}}(\textcolor{myblue}{r(\tau)}\|q)
=\frac12\!\left[\log\frac{\sigma_q^2}{\textcolor{myblue}{\sigma_r^2(\tau)}}+\frac{\textcolor{myblue}{\sigma_r^2(\tau)}+(1-\tau)^2\Delta_\mu^2}{\sigma_q^2}-1\right].
\end{equation}
Then:
\begin{itemize}
\item If $0 \le \theta < D_{\mathrm{KL}}(p\|q)$, the inequality constraint is active and \textcolor{myblue}{$\tau^*$} is uniquely determined by \textcolor{myblue}{$g(\tau^*)=\theta$}, yielding $r^*=\textcolor{myblue}{r(\tau^*)}$.
\item If $\theta \ge D_{\mathrm{KL}}(p\|q)$, the constraint is inactive and \textcolor{myblue}{$\tau^*=0$}, hence $r^*=p$.
\item Boundary cases: $\theta=0 \Rightarrow \textcolor{myblue}{\tau^*=1}$ (so $r^*=q$); $\theta=D_{\mathrm{KL}}(p\|q)\Rightarrow \textcolor{myblue}{\tau^*=0}$ (so $r^*=p$).
\end{itemize}
Moreover, if $p\neq q$, the function $g(\tau)$ is strictly decreasing on $\tau\in(0,1)$, hence the solution \textcolor{myblue}{$\tau^*$} to $g(\tau)=\theta$ is unique whenever $\theta\in[0, D_{\mathrm{KL}}(p\|q))$.
\end{theorem}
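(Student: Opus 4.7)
The plan is to set up the constrained optimization within the Gaussian family using standard Lagrangian/KKT techniques, derive an explicit parameterization of the stationary points indexed by a single scalar $\tau\in[0,1]$ (a rescaled Lagrange multiplier), and then use monotonicity of the induced constraint function $g(\tau)$ to pin down $\tau^*$ and handle the boundary cases.

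First I would write out both KL divergences in closed form using $D_{\mathrm{KL}}(\mathcal{N}(a,s^2)\|\mathcal{N}(b,t^2))=\tfrac12\bigl[\log(t^2/s^2)+(s^2+(a-b)^2)/t^2-1\bigr]$, expressing $D_{\mathrm{KL}}(p\|r)$ and $D_{\mathrm{KL}}(r\|q)$ as functions of $(\mu_r,\sigma_r^2)$. I would then form the Lagrangian $\mathcal{L}(\mu_r,\sigma_r^2,\lambda)=D_{\mathrm{KL}}(p\|r)+\lambda\bigl[D_{\mathrm{KL}}(r\|q)-\theta\bigr]$ with multiplier $\lambda\geq 0$. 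Setting $\partial\mathcal{L}/\partial\mu_r=0$ gives a linear equation whose solution is a convex combination of $\mu_p$ and $\mu_q$; introducing $\tau:=\lambda/(1+\lambda)\in[0,1]$ recovers \eqref{eq:mu_solution}. Substituting this $\mu_r$ back into $\partial\mathcal{L}/\partial\sigma_r^2=0$ yields a quadratic-like relation in $\sigma_r^2$ that picks up an extra term $\tau^2(1-\tau)(\mu_p-\mu_q)^2$ from the squared mean gap present in $D_{\mathrm{KL}}(p\|r)$; simplifying produces \eqref{eq:sigma_solution}. This constitutes the one-parameter family $r(\tau)$ that the theorem asserts.

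Next I would analyze $g(\tau)=D_{\mathrm{KL}}(r(\tau)\|q)$ directly using the closed form and the already-derived expressions for $\mu_r(\tau)$ and $\sigma_r^2(\tau)$. The boundary values $g(0)=D_{\mathrm{KL}}(p\|q)$ and $g(1)=0$ follow by direct substitution. For strict monotonicity when $p\neq q$, I would differentiate $g$ in $\tau$ and show $g'(\tau)<0$ on $(0,1)$; once this is established, the intermediate value theorem supplies a unique $\tau^*\in(0,1]$ satisfying $g(\tau^*)=\theta$ for any $\theta\in[0,D_{\mathrm{KL}}(p\|q))$. The case $\theta\ge D_{\mathrm{KL}}(p\|q)$ is handled by noting that the unconstrained minimizer of $D_{\mathrm{KL}}(p\|r)$ over Gaussians is $r=p$ (so $\tau^*=0$) and the constraint is slack there; the two endpoints of $\theta$ in the theorem are immediate consequences of the boundary values of $g$. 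Finally, global optimality of $r(\tau^*)$ among Gaussians would be obtained either by checking that $\mathcal{L}$ is convex in $(\mu_r,\sigma_r^2)$ on the relevant domain, or by verifying the second-order sufficient KKT conditions at the stationary point.

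The main obstacle I anticipate is the monotonicity proof of $g$. After differentiation, the sign depends on how $\sigma_r^2(\tau)$ and $(1-\tau)^2(\mu_p-\mu_q)^2$ trade off, and the cross-term $\tau^2(1-\tau)(\mu_p-\mu_q)^2$ in \eqref{eq:sigma_solution} couples the two nontrivially. A clean way forward is to write $g(\tau)=\tfrac12\bigl[-\log(\sigma_r^2(\tau)/\sigma_q^2)+(\sigma_r^2(\tau)-\sigma_q^2)/\sigma_q^2+(1-\tau)^2\Delta_\mu^2/\sigma_q^2\bigr]$, bound the derivative by separating the variance term (where convexity of $-\log x + x/\sigma_q^2$ is exploited) from the mean-drift term (which contributes a manifestly negative $-2(1-\tau)\Delta_\mu^2/\sigma_q^2$), and then confirm the combined expression is strictly negative on $(0,1)$. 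Once this monotonicity is in hand, uniqueness of $\tau^*$ and the full characterization of $r^*$ follow immediately.
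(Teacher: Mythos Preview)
Your overall strategy---KKT stationarity on the two-parameter Gaussian family, a scalar reparameterization of the stationary set, then monotonicity of $g(\tau)$ to locate $\tau^*$---is exactly the paper's approach. However, your specific choice $\tau:=\lambda/(1+\lambda)$ will not produce the theorem's formulas. The $\mu_r$-stationarity condition reads
\[
\frac{\mu_r-\mu_p}{\sigma_r^2}+\lambda\,\frac{\mu_r-\mu_q}{\sigma_q^2}=0,
\]
whose solution is $\mu_r=\dfrac{\sigma_q^2\mu_p+\lambda\sigma_r^2\mu_q}{\sigma_q^2+\lambda\sigma_r^2}$, so the weight on $\mu_q$ is $\dfrac{\lambda\sigma_r^2}{\sigma_q^2+\lambda\sigma_r^2}$, not $\lambda/(1+\lambda)$; the two agree only when $\sigma_r^2=\sigma_q^2$. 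The paper therefore sets $\tau:=\dfrac{\lambda\sigma_r^2}{\sigma_q^2+\lambda\sigma_r^2}$, which couples the multiplier to the (still unknown) optimal variance. Inverting to $\lambda=\dfrac{\tau\sigma_q^2}{(1-\tau)\sigma_r^2}$ and substituting into the $\sigma_r^2$-stationarity equation is precisely what makes the stray $\sigma_r^2$ factors cancel and collapses the variance equation to the closed form \eqref{eq:sigma_solution} with its cubic correction $\tau^2(1-\tau)\Delta_\mu^2$. With your $\tau$ those cancellations do not occur, and you will not land on the stated $\sigma_r^2(\tau)$ curve.

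Once the reparameterization is corrected, the remainder of your plan---the boundary values $g(0)=D_{\mathrm{KL}}(p\|q)$ and $g(1)=0$, the sign analysis of $g'(\tau)$, and the active/inactive split on $\theta$---matches the paper's argument essentially step for step. (The paper, like you, treats the strict negativity of $g'$ on $(0,1)$ somewhat summarily: it writes $g'(\tau)=\tfrac12\bigl(\sigma_q^{-2}-\sigma_r^{-2}(\tau)\bigr)\,\tfrac{d}{d\tau}\sigma_r^2(\tau)-(1-\tau)\Delta_\mu^2/\sigma_q^2$ and asserts the sign, so your instinct that this is the part requiring the most care is well placed.)
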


\begin{proof}
We use the method of Lagrange multipliers (KKT for inequality constraints). For one-dimensional Gaussians,
\begin{equation}
D_{\mathrm{KL}}\!\big(\mathcal N(\mu_1,\sigma_1^2)\,\big\|\,\mathcal N(\mu_2,\sigma_2^2)\big)
=\frac12\!\left[\log\frac{\sigma_2^2}{\sigma_1^2}+\frac{\sigma_1^2+(\mu_1-\mu_2)^2}{\sigma_2^2}-1\right].
\end{equation}
Write the decision variables as $(\mu_r,\sigma_r^2)$ with $\sigma_r^2>0$, and define
\begin{equation}
J(\mu_r,\sigma_r^2):=D_{\mathrm{KL}}(p\|r)
=\frac12\!\left[\log\frac{\sigma_r^2}{\sigma_p^2}+\frac{\sigma_p^2+(\mu_p-\mu_r)^2}{\sigma_r^2}-1\right],
\end{equation}
\begin{equation}
C(\mu_r,\sigma_r^2):=D_{\mathrm{KL}}(r\|q)
=\frac12\!\left[\log\frac{\sigma_q^2}{\sigma_r^2}+\frac{\sigma_r^2+(\mu_r-\mu_q)^2}{\sigma_q^2}-1\right].
\end{equation}
The Lagrangian is
\begin{equation}
\mathcal L(\mu_r,\sigma_r^2,\nu)=J(\mu_r,\sigma_r^2)+\nu\big(C(\mu_r,\sigma_r^2)-\theta\big),\qquad \nu\ge 0,
\end{equation}
with the complementarity condition $\nu\big(C(\mu_r,\sigma_r^2)-\theta\big)=0$ and primal feasibility $C(\mu_r,\sigma_r^2)\le \theta$.

\paragraph{Stationarity conditions.}
Taking derivatives,
\begin{align}
\frac{\partial\mathcal L}{\partial \mu_r}&=\frac{\mu_r-\mu_p}{\sigma_r^2}+\nu\,\frac{\mu_r-\mu_q}{\sigma_q^2}=0, \label{eq:stat_mu}\\
\frac{\partial\mathcal L}{\partial \sigma_r^2}&=\frac12\!\left(\frac1{\sigma_r^2}-\frac{\sigma_p^2+(\mu_p-\mu_r)^2}{(\sigma_r^2)^2}\right)+\nu\,\frac12\!\left(-\frac1{\sigma_r^2}+\frac1{\sigma_q^2}\right)=0. \label{eq:stat_s}
\end{align}

\paragraph{Step 1: Mean parameter.}
From \Eqref{eq:stat_mu},
\begin{equation}
(\mu_r-\mu_p)\,\sigma_q^2+\nu\,\sigma_r^2(\mu_r-\mu_q)=0
\;\Rightarrow\;
\mu_r=\frac{\sigma_q^2\,\mu_p+\nu\,\sigma_r^2\,\mu_q}{\sigma_q^2+\nu\,\sigma_r^2}.
\end{equation}
Introduce the reparameterization
\begin{equation}
\tau:=\frac{\nu\,\sigma_r^2}{\sigma_q^2+\nu\,\sigma_r^2}\in[0,1],
\end{equation}
which yields the affine interpolation:
\begin{equation}
\mu_r=(1-\tau)\mu_p+\tau\mu_q,
\qquad
\mu_r-\mu_p=-\,\tau\Delta_\mu,\quad
\mu_r-\mu_q=(1-\tau)\Delta_\mu.
\end{equation}
If the constraint is inactive ($\nu=0$), then $\tau=0$ and $\mu_r=\mu_p$.

\paragraph{Step 2: Variance parameter.}
Using \Eqref{eq:stat_s}, substituting $(\mu_p-\mu_r)^2=\tau^2\Delta_\mu^2$, and employing $\nu=\dfrac{\tau\,\sigma_q^2}{\sigma_r^2(1-\tau)}$ (from the definition of $\tau$), we obtain
\begin{equation}
\frac1{\sigma_r^2}-\frac{\sigma_p^2+\tau^2\Delta_\mu^2}{(\sigma_r^2)^2}
+\frac{\tau}{1-\tau}\!\left(-\frac1{\sigma_r^2}+\frac1{\sigma_q^2}\right)=0.
\end{equation}
Multiplying by $(\sigma_r^2)^2$ and rearranging gives
\begin{equation}
\frac{\sigma_r^2}{1-\tau}-\frac{\tau\,\sigma_q^2}{1-\tau}-\sigma_p^2-\tau^2\Delta_\mu^2=0,
\end{equation}
hence the closed form
\begin{equation}
\textcolor{myblue}{\sigma_r^2(\tau)}=(1-\tau)\,\sigma_p^2+\tau\,\sigma_q^2+\tau^2(1-\tau)\,\Delta_\mu^2,
\end{equation}
which proves \Eqref{eq:sigma_solution}. If the constraint is inactive ($\tau=0$), then \textcolor{myblue}{$\sigma_r^2(0)=\sigma_p^2$}.

\paragraph{Step 3: Constraint, monotonicity, and cases.}
Insert \textcolor{myblue}{$\mu_r(\tau)$} and \textcolor{myblue}{$\sigma_r^2(\tau)$} into $C$ to obtain the scalar function
\begin{equation}
g(\tau)=\frac12\!\left[\log\frac{\sigma_q^2}{\textcolor{myblue}{\sigma_r^2(\tau)}}+\frac{\textcolor{myblue}{\sigma_r^2(\tau)}+(1-\tau)^2\Delta_\mu^2}{\sigma_q^2}-1\right].
\end{equation}
Differentiating and using
\begin{equation}
\frac{d}{d\tau}\,\textcolor{myblue}{\sigma_r^2(\tau)}=-\sigma_p^2+\sigma_q^2+\tau(2-3\tau)\Delta_\mu^2,
\quad
\textcolor{myblue}{\mu_r(\tau)}-\mu_q=(1-\tau)\Delta_\mu,
\end{equation}
we get
\begin{equation}
g'(\tau)=\frac12\!\left(\frac1{\sigma_q^2}-\frac1{\textcolor{myblue}{\sigma_r^2(\tau)}}\right)\frac{d}{d\tau}\textcolor{myblue}{\sigma_r^2(\tau)}
-\frac{(1-\tau)\Delta_\mu^2}{\sigma_q^2}\;\le\;0,
\end{equation}
with equality only in degenerate cases (e.g.\ $\tau\in\{0,1\}$ or $\Delta_\mu=0$ together with $\sigma_p^2=\sigma_q^2$). Hence $g$ is strictly decreasing on $(0,1)$ whenever $p\neq q$.

\emph{Active-constraint case:} If $0\le \theta< D_{\mathrm{KL}}(p\|q)$, complementarity implies $\nu>0$, thus $\tau\in(0,1]$ and the unique optimal parameter \textcolor{myblue}{$\tau^*$} satisfies \textcolor{myblue}{$g(\tau^*)=\theta$}.

\emph{Inactive-constraint case:} If $\theta\ge D_{\mathrm{KL}}(p\|q)$, taking $r=p$ yields feasibility $D_{\mathrm{KL}}(p\|q)\le\theta$ and zero objective value; optimality follows since $J(\cdot)\ge 0$ with equality only at $r=p$. Therefore $\nu=0$, \textcolor{myblue}{$\tau^*=0$}, and $r^*=p$.

The boundary $\theta=0$ gives \textcolor{myblue}{$\tau^*=1$} and $r^*=q$; $\theta=D_{\mathrm{KL}}(p\|q)$ gives \textcolor{myblue}{$\tau^*=0$} and $r^*=p$.
\end{proof}

\section{Asymptotic behavior of the cosine similarity for a discretized Gaussian}
\label{app:cosine_gaussian_derivation}

This appendix provides a rigorous derivation of the asymptotic relationship between the cosine similarity and the standard deviation $\sigma$ of a Gaussian distribution as its discretization becomes infinitely fine. Throughout, the window $[-L,L]$ is \emph{large but finite}: large so that the truncated Gaussian approximates the full Gaussian well, and finite so that the discrete uniform distribution is well-defined. This "large window" condition is naturally met in applications like Large Language Models, where predictive distributions over vast vocabularies are highly concentrated.

\begin{definition}[Discretized Gaussian]
Let a discrete probability distribution $p$ over a vocabulary of size $V$ be a discretization of a continuous Gaussian probability density function (PDF) $\phi(x; \mu, \sigma)$ over a symmetric interval $[-L, L]$. The probability $p_i$ in the $i$-th bin of width $\Delta x = 2L/V$ is defined by the PDF at the bin's center $x_i$, such that $p_i = \phi(x_i; \mu, \sigma)\,\Delta x$. This definition ensures $\sum_{i=1}^{V} p_i \to \int_{-L}^{L}\phi(x;\mu,\sigma)\,dx$ as $V \to \infty$. Let $U$ be the discrete uniform distribution $(1/V, \dots, 1/V)$.
\end{definition}

\begin{theorem}[Asymptotic Cosine Similarity]
In the limit as the vocabulary size $V \to \infty$ (with $L$ fixed and large but finite), the cosine similarity between the discretized Gaussian $p$ and the uniform distribution $U$ satisfies
\begin{equation}
\label{eq:cos_fixedL_exact}
    \lim_{V \to \infty} \cos(p, U)
    \;=\;
    \frac{\displaystyle \int_{-L}^{L} \phi(x;\mu,\sigma)\,dx}
         {\displaystyle \sqrt{\,2L \int_{-L}^{L} \phi(x;\mu,\sigma)^2\,dx\,}}.
\end{equation}
Assuming the interval $[-L,L]$ is large enough to contain most of the probability mass, we further obtain the closed-form dependence on $\sigma$:
\begin{equation}
\label{eq:cos_fixedL_sigma}
    \lim_{V \to \infty} \cos(p, U)
    \;=\;
    \sqrt{\frac{\sigma\,\sqrt{\pi}}{L}},
\end{equation}
i.e., for fixed (large but finite) $L$, the cosine similarity obeys $\cos(p,U)\propto \sigma^{1/2}$.
\end{theorem}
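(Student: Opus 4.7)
First, I would rewrite the cosine similarity in a form whose limit is transparent. Since $\|U\|_2 = 1/\sqrt{V}$ and $p\cdot U = V^{-1}\sum_i p_i$, one immediately obtains $\cos(p,U) = \sum_i p_i \big/ \sqrt{V \sum_i p_i^2}$. Plugging in the discretization $p_i = \phi(x_i;\mu,\sigma)\,\Delta x$ with $\Delta x = 2L/V$ turns the numerator into the Riemann sum $\Delta x \sum_i \phi(x_i)$ for $\int_{-L}^{L}\phi\,dx$, while $V\sum_i p_i^2 = 2L\cdot\bigl(\Delta x \sum_i \phi(x_i)^2\bigr)$ contains a Riemann sum for $\int_{-L}^{L}\phi^2\,dx$. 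This reduction is the key structural step: the factor of $V$ in the denominator is precisely what is needed to absorb one power of $\Delta x$ and recover a Riemann sum for $\phi^2$ rather than for $\phi$.

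Next, I would take $V\to\infty$ with $L$ fixed. Because $\phi$ is continuous on the compact interval $[-L,L]$, standard Riemann-sum convergence yields $\sum_i p_i \to \int_{-L}^{L}\phi\,dx$ and $V\sum_i p_i^2 \to 2L\int_{-L}^{L}\phi^2\,dx$; taking the ratio (the denominator is strictly positive because $\phi>0$) gives exactly \eqref{eq:cos_fixedL_exact}.

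Then, to pass to the closed form \eqref{eq:cos_fixedL_sigma}, I would invoke the ``large but finite $L$'' assumption. Gaussian tail decay makes both truncated integrals close to their whole-line values: $\int_{-L}^{L}\phi\,dx \to 1$ and $\int_{-L}^{L}\phi^2\,dx \to \int_{\mathbb{R}}\phi^2\,dx$ as $L/\sigma \to \infty$. The whole-line integral is computed directly from $\phi(x;\mu,\sigma)^2 = \frac{1}{2\pi\sigma^2}\exp\bigl(-(x-\mu)^2/\sigma^2\bigr)$ via the substitution $u=(x-\mu)/\sigma$, giving $1/(2\sigma\sqrt{\pi})$. Substituting into \eqref{eq:cos_fixedL_exact} produces $\sqrt{\sigma\sqrt{\pi}/L}$, i.e.\ the advertised $\sigma^{1/2}$ scaling.

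The individual calculations are routine, so the main obstacle is bookkeeping rather than mathematics: I need to cleanly separate the two regimes --- the genuine $V\to\infty$ limit with $L$ held fixed (an equality) versus the subsequent large-$L$ approximation in \eqref{eq:cos_fixedL_sigma} (an asymptotic, not an exact limit) --- and to make the ``large but finite $L$'' condition quantitative, e.g.\ by bounding the tail error $\int_{|x-\mu|>L}\phi^k\,dx$ for $k\in\{1,2\}$ and showing it is negligible when $L\gg\sigma$. A secondary subtlety is justifying that the midpoint-style sampling $p_i=\phi(x_i)\Delta x$ constitutes a valid Riemann discretization so that both sums converge at the expected $O(1/V)$ rate; any other symmetric choice of sample points would yield the same limit but a different error constant.
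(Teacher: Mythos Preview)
Your proposal is correct and follows essentially the same route as the paper: rewrite $\cos(p,U)=\sum_i p_i/\sqrt{V\sum_i p_i^2}$, identify the numerator and denominator as Riemann sums for $\int_{-L}^{L}\phi$ and $2L\int_{-L}^{L}\phi^2$, pass to the limit to obtain \eqref{eq:cos_fixedL_exact}, and then invoke the large-$L$ condition with $\int_{\mathbb{R}}\phi^2=1/(2\sigma\sqrt{\pi})$ to reach \eqref{eq:cos_fixedL_sigma}. If anything, your version is more careful than the paper's in explicitly separating the exact $V\to\infty$ limit from the subsequent large-$L$ approximation and in noting that the latter is an asymptotic rather than an equality.
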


\begin{proof}
We begin with the cosine similarity written purely in terms of sums:
\begin{equation}
\label{eq:cos_sum_form}
    \cos(p,U)
    \;=\;
    \frac{\sum_{i=1}^{V} p_i \cdot (1/V)}
         {\sqrt{\big(\sum_{i=1}^{V} p_i^2\big)\,\big(\sum_{i=1}^{V} (1/V)^2\big)}}
    \;=\;
    \frac{\sum_{i=1}^{V} p_i}{\sqrt{\,V \sum_{i=1}^{V} p_i^2\,}}.
\end{equation}
Define
\begin{equation}
\label{eq:SVQV_def}
    S_V \;:=\; \sum_{i=1}^{V} p_i,
    \qquad
    Q_V \;:=\; V \sum_{i=1}^{V} p_i^2.
\end{equation}
By the discretization rule $p_i=\phi(x_i;\mu,\sigma)\,\Delta x$ with $\Delta x=2L/V$, these sums become Riemann sums. In the limit $V\to\infty$, they converge to integrals:
\begin{equation}
\label{eq:S_limit}
    S_V
    \xrightarrow[V\to\infty]{}
    \int_{-L}^{L} \phi(x;\mu,\sigma)\,dx,
\end{equation}
and
\begin{align}
\label{eq:Q_limit_prep}
    Q_V
    \;=\;
    V (\Delta x)^2 \sum_{i=1}^{V} \phi(x_i;\mu,\sigma)^2
    \;=\;
    (2L)\;
    \Big(\Delta x \sum_{i=1}^{V} \phi(x_i;\mu,\sigma)^2\Big)
    \xrightarrow[V\to\infty]{}
    2L \int_{-L}^{L} \phi(x;\mu,\sigma)^2\,dx.
\end{align}
Combining these limits yields the exact fixed-$L$ formula \Eqref{eq:cos_fixedL_exact}.

To make the $\sigma$-dependence explicit, we now invoke a condition that is well-justified in practice. The output of systems like Large Language Models (LLMs) over their vast vocabularies is empirically a long-tail distribution: probability mass is highly concentrated on a few likely outcomes. The Gaussian PDF serves here as a tractable mathematical model for this concentration. A highly peaked distribution is modeled by a \textbf{small standard deviation $\sigma$}. Given that the vocabulary space (represented by $L$) is large, the condition $L \gg \sigma$ is naturally fulfilled.

Under this $L \gg \sigma$ condition, the probability mass in the tails outside $[-L,L]$ is negligible. We can therefore approximate the truncated integrals by their values on the full real line $\mathbb{R}$:
\begin{equation}
\label{eq:full_line_ids}
    \int_{\mathbb{R}}\phi(x;\mu,\sigma)\,dx=1,
    \qquad
    \int_{\mathbb{R}}\phi(x;\mu,\sigma)^2\,dx=\frac{1}{2\sigma\sqrt{\pi}}.
\end{equation}
This gives the approximation:
\begin{equation}
\label{eq:truncate_to_full}
    \int_{-L}^{L}\phi(x;\mu,\sigma)\,dx \approx 1,
    \qquad
    \int_{-L}^{L}\phi(x;\mu,\sigma)^2\,dx \approx \frac{1}{2\sigma\sqrt{\pi}}.
\end{equation}
Substituting \Eqref{eq:truncate_to_full} into the exact formula \Eqref{eq:cos_fixedL_exact} gives the final asymptotic result:
\begin{equation}
    \lim_{V\to\infty}\cos(p,U)
    \;\approx\;
    \frac{1}{\sqrt{\,2L \cdot \tfrac{1}{2\sigma\sqrt{\pi}}\,}}
    \;=\;
    \sqrt{\frac{\sigma\,\sqrt{\pi}}{L}},
\end{equation}
which is \Eqref{eq:cos_fixedL_sigma}. This completes the proof.
\end{proof}

\section{Results at temperature=0}
\label{app:temp_zero_results}

We also evaluate our proposed SFDD when the default temperature is set to 0. The results are summarized in Table~\ref{tab:main_results_t0} and Table~\ref{tab:ablation_studies_t0}, which again show that our proposed SFDD can deliver more reliable data selection performance towards more efficient draft model training in the context of SD.

\begin{table*}[ht]
\centering
\caption{Comparison of various metrics for data importance at a 50\% retain ratio (temperature = 0).}
\label{tab:main_results_t0}
\resizebox{\textwidth}{!}{%
\begin{tabular}{l|cc|cc|cc|cc|cc|cc}
\toprule
& \multicolumn{2}{c|}{\textbf{GSM8K}} & \multicolumn{2}{c|}{\textbf{Alpaca}} & \multicolumn{2}{c|}{\textbf{MTB}} & \multicolumn{2}{c|}{\textbf{CNN/DM}} & \multicolumn{2}{c|}{\textbf{NQ}} & \multicolumn{2}{c}{\textbf{Average}} \\
\cmidrule(lr){2-3} \cmidrule(lr){4-5} \cmidrule(lr){6-7} \cmidrule(lr){8-9} \cmidrule(lr){10-11} \cmidrule(lr){12-13}
Method & Speedup & $l$ & Speedup & $l$ & Speedup & $l$ & Speedup & $l$ & Speedup & $l$ & Speedup & $l$ \\
\midrule
No Filter            & 2.86$\times$ & 3.42 & 2.99$\times$ & 3.12 & 3.06$\times$ & 3.17 & 2.66$\times$ & 2.75 & 2.43$\times$ & 2.53 & 2.80$\times$ & 3.00 \\
\midrule
Random               & 2.52$\times$ & 3.03 & 2.69$\times$ & 2.81 & 2.53$\times$ & 2.82 & 2.22$\times$ & 2.50 & 2.08$\times$ & 2.31 & 2.41$\times$ & 2.69 \\
Entropy              & 2.66$\times$ & 2.96 & 2.73$\times$ & 2.83 & 2.54$\times$ & 2.82 & 2.24$\times$ & 2.52 & 2.19$\times$ & 2.34 & 2.47$\times$ & 2.69 \\
Top-1 Probability    & 2.58$\times$ & 3.00 & 2.68$\times$ & 2.66 & 2.56$\times$ & 2.81 & 2.29$\times$ & 2.51 & 2.11$\times$ & 2.32 & 2.44$\times$ & 2.66 \\
Margin               & 2.65$\times$ & 2.99 & 2.60$\times$ & 2.80 & 2.49$\times$ & 2.69 & 2.23$\times$ & 2.41 & 2.06$\times$ & 2.18 & 2.41$\times$ & 2.61 \\
Energy Score         & 2.52$\times$ & 2.91 & 2.69$\times$ & 2.80 & 2.57$\times$ & 2.79 & 2.24$\times$ & 2.50 & 2.15$\times$ & 2.29 & 2.43$\times$ & 2.66 \\
PPL                  & 2.54$\times$ & 2.91 & 2.69$\times$ & 2.90 & 2.49$\times$ & 2.80 & 2.25$\times$ & 2.51 & 2.22$\times$ & 2.32 & 2.44$\times$ & 2.69 \\
\textbf{SFDD (Ours)}& \textbf{2.79$\times$} & \textbf{3.07} & \textbf{2.92$\times$} & \textbf{2.90} & \textbf{2.70$\times$} & \textbf{2.88} & \textbf{2.46$\times$} & \textbf{2.54} & \textbf{2.24$\times$} & \textbf{2.38} & \textbf{2.62$\times$} & \textbf{2.75} \\
\bottomrule
\end{tabular}%
}
\end{table*}

\begin{table*}[h!]
\centering
\caption{Comparison of SFDD and Random Filtering under different retain ratios (temperature = 0).}
\label{tab:ablation_studies_t0}
\resizebox{\textwidth}{!}{%
\begin{tabular}{ll|cc|cc|cc|cc|cc|cc}
\toprule
& & \multicolumn{2}{c|}{\textbf{GSM8K}} & \multicolumn{2}{c|}{\textbf{Alpaca}} & \multicolumn{2}{c|}{\textbf{MTB}} & \multicolumn{2}{c|}{\textbf{CNN/DM}} & \multicolumn{2}{c|}{\textbf{NQ}} & \multicolumn{2}{c}{\textbf{Average}} \\
\cmidrule(lr){3-4} \cmidrule(lr){5-6} \cmidrule(lr){7-8} \cmidrule(lr){9-10} \cmidrule(lr){11-12} \cmidrule(lr){13-14}
Retain Ratio & Method & Speedup & $l$ & Speedup & $l$ & Speedup & $l$ & Speedup & $l$ & Speedup & $l$ & Speedup & $l$ \\
\midrule
100\% & No Filter
& 2.86$\times$ & 3.42 & 2.99$\times$ & 3.12 & 3.06$\times$ & 3.17 & 2.66$\times$ & 2.75 & 2.43$\times$ & 2.53 & 2.80$\times$ & 3.00 \\
\midrule
\multirow{2}{*}{70\%} & Random
& 2.58$\times$ & 3.01 & 2.70$\times$ & 2.79 & 2.50$\times$ & 2.79 & 2.21$\times$ & 2.51 & 2.08$\times$ & 2.30 & 2.41$\times$ & 2.68 \\
& \textbf{SFDD (Ours)}
& \textbf{2.79$\times$} & \textbf{3.07} & \textbf{2.96$\times$} & \textbf{2.88} & \textbf{2.98$\times$} & \textbf{2.89} & \textbf{2.43$\times$} & \textbf{2.63} & \textbf{2.35$\times$} & \textbf{2.39} & \textbf{2.70$\times$} & \textbf{2.77} \\
\midrule
\multirow{2}{*}{60\%} & Random
& 2.60$\times$ & 3.01 & 2.71$\times$ & 2.78 & 2.59$\times$ & 2.82 & 2.27$\times$ & 2.50 & 2.15$\times$ & 2.30 & 2.46$\times$ & 2.68 \\
& \textbf{SFDD (Ours)}
& \textbf{2.70$\times$} & \textbf{3.08} & \textbf{2.96$\times$} & \textbf{2.91} & \textbf{2.76$\times$} & \textbf{2.87} & \textbf{2.43$\times$} & \textbf{2.59} & \textbf{2.44$\times$} & \textbf{2.40} & \textbf{2.66$\times$} & \textbf{2.77} \\
\midrule
\multirow{2}{*}{50\%} & Random
& 2.52$\times$ & 3.03 & 2.69$\times$ & 2.81 & 2.53$\times$ & 2.82 & 2.22$\times$ & 2.50 & 2.08$\times$ & 2.31 & 2.41$\times$ & 2.69 \\
& \textbf{SFDD (Ours)}
& \textbf{2.79$\times$} & \textbf{3.07} & \textbf{2.92$\times$} & \textbf{2.90} & \textbf{2.70$\times$} & \textbf{2.88} & \textbf{2.46$\times$} & \textbf{2.54} & \textbf{2.24$\times$} & \textbf{2.38} & \textbf{2.62$\times$} & \textbf{2.75} \\
\midrule
\multirow{2}{*}{40\%} & Random
& 2.56$\times$ & 2.97 & 2.54$\times$ & 2.74 & 2.56$\times$ & 2.82 & 2.24$\times$ & 2.50 & 2.00$\times$ & 2.26 & 2.38$\times$ & 2.66 \\
& \textbf{SFDD (Ours)}
& \textbf{2.66$\times$} & \textbf{3.04} & \textbf{2.78$\times$} & \textbf{2.87} & \textbf{2.87$\times$} & \textbf{2.82} & \textbf{2.45$\times$} & \textbf{2.59} & \textbf{2.28$\times$} & \textbf{2.36} & \textbf{2.61$\times$} & \textbf{2.74} \\
\midrule
\multirow{2}{*}{30\%} & Random
& 2.50$\times$ & 2.92 & 2.60$\times$ & 2.72 & 2.50$\times$ & 2.73 & 2.23$\times$ & 2.40 & 2.05$\times$ & 2.23 & 2.38$\times$ & 2.60 \\
& \textbf{SFDD (Ours)}
& \textbf{2.59$\times$} & \textbf{2.98} & \textbf{2.81$\times$} & \textbf{2.81} & \textbf{2.59$\times$} & \textbf{2.79} & \textbf{2.37$\times$} & \textbf{2.47} & \textbf{2.27$\times$} & \textbf{2.30} & \textbf{2.53$\times$} & \textbf{2.67} \\
\bottomrule
\end{tabular}}
\end{table*}

\section{Timing protocol and data selection overhead}
\label{app:timing-setup}

\paragraph{End-to-end timing.}
For fair comparisons, all results related to training time reported in the main text are end-to-end wall-clock measurements that include the data selection stage.




\textcolor{myblue}{
\textbf{Cost analysis of SFDD-based data selection.}
We note that the cost of SFDD-based data selection is negligible compared to the training cost. In our experiments, we only need to run SFDD-based data selection once over the training set, which takes only 2,242 seconds and accounts for about 3.85\% of the 58,227-seconds whole training time of no filtering. For fair comparisons, our reported training speedups have already included this one-off data selection cost. From the computational structure, this is also expected: during training, each sample requires a forward pass of the target model, a forward pass of the draft model, and a full backward pass, repeated over many epochs. In contrast, the SFDD-based data selection runs only a single forward pass of the target model over the training set. Therefore, the one-off cost of SFDD-based data selection remains negligible compared to multi-epoch training and does not become a practical bottleneck in time or computation.
}

{\color{myblue}

\section{More training details}
\label{app:train-config}

We follow the official EAGLE-2 training setup and fix the target model during all experiments. The pretrained target model is LLaMA3-Instruct-8B. The draft model is a lightweight LLaMA-style predictor with a single transformer layer, initialized and
configured according to the official EAGLE-2. The training hyperparameters for the draft model are summarized in
Table~\ref{tab:train-config}.

\begin{table}[H]
\captionsetup{labelfont={color=myblue}}
\centering
\caption{\textcolor{myblue}{Training hyperparameters.}}
\label{tab:train-config}
\color{myblue}\begin{tabular}{ll}
\hline
Hyperparameter & Value \\
\hline
Number of epochs & 30 \\
Learning rate & $5\times 10^{-5}$ \\
Batch size & 2 \\
Gradient accumulation steps & 1 \\
Total training steps & 800{,}000 \\
Warmup & enabled \\
Warmup steps & 2{,}000 \\
Optimizer & AdamW ($\beta_1=0.9,\ \beta_2=0.95$) \\
Gradient clipping & 0.5 \\
Maximum sequence length & 2{,}048 \\
Number of data loader workers & 8 \\
\hline
\end{tabular}
\end{table}

\section{Additional analysis}

\subsection{Analysis of inconsistency between acceptance Length and speedup}
\label{app:speedup_analysis}

In this section, we address the observation from Table~\ref{tab:main_results} regarding the non-linear relationship between acceptance length and inference speedup (e.g., a larger acceptance length gap does not always translate to a proportional speedup gap). We confirm that this phenomenon is not due to hardware mismatch or redundancy but is an inherent characteristic of speculative decoding, consistent with findings in prior works~\citep{li2024eagle-2, weng2025coral}. For instance, as shown in Table 1 of EAGLE-2~\citep{li2024eagle-2}, the acceptance length of Vicuna-13B (Lookahead method) on Alpaca (4.89) is larger than on MT-bench (4.83), whereas the inference speedup shows the opposite trend. This indicates that acceptance length cannot be equivalently translated into actual speedup.

\begin{table}[h]
\captionsetup{labelfont={color=myblue}}
\centering
\small
\caption{\textcolor{myblue}{Length-stratified speculative decoding statistics on GSM8K using SFDD. ``Top 50\%'' denotes the subset containing the longest 50\% of samples based on prompt length.}}
\label{tab:gsm8k_bottom50}
\color{myblue}\begin{tabular}{lcccc}
\toprule
Method & Speedup (Full) & $l$ (Full) & Speedup (Top-50\%) & $l$ (Top-50\%) \\
\midrule
SFDD & 2.6856$\times$ & 3.9467 & 2.4688$\times$ & 3.9506 \\
\bottomrule
\end{tabular}
\end{table}

\paragraph{Empirical explanations.}
The rationale behind this discrepancy is that the acceptance length only captures the behavior of the decoding stage, while ignoring the prefilling stage. During the decoding stage, both the draft and target models can benefit from the KV cache, so the same acceptance length typically results in similar inference costs when the number of decoded tokens is the same. However, during the prefilling stage---where the KV cache cannot be utilized---the inference cost grows exponentially with respect to the prompt length. As a result, prompts with longer sequences often incur significantly higher inference costs, which reduces the achievable speedup even when the acceptance length during the decoding stage is high.

\paragraph{Experimental validations.}
To substantiate the above explanations, we sort all samples in GSM8K based on their prompt lengths and re-calculate the acceptance length ($l$) and speed on the longest 50\% of samples (denoted as ``Top 50\%''). As shown in Table~\ref{tab:gsm8k_bottom50}, the speedup on the Top 50\% differs from that on the full dataset, even though their acceptance lengths are similar (3.9467 vs. 3.9506). This confirms that prompt length variability may impact the final speedup calculation.

\subsection{Analysis of entropy as a distribution-dispersion metric}

\label{app:entropy_analysis}
\begin{figure}[t]
\captionsetup{labelfont={color=myblue}}
    \centering
    \subfloat[entropy($p$)]{
        \includegraphics[width=0.32\textwidth]{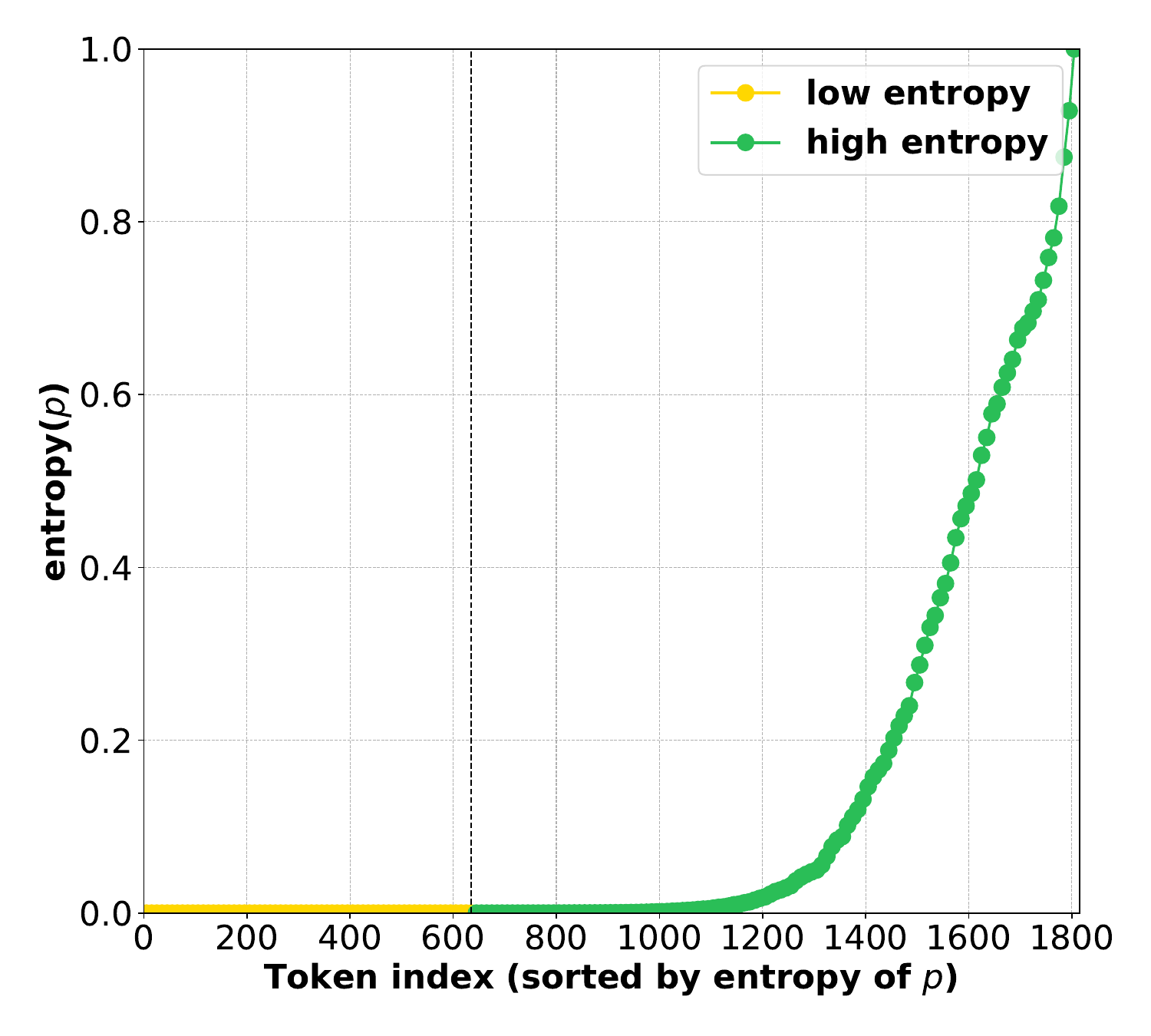} 
        \label{fig:app_entropy_p}
    }\hfill
    \subfloat[Change in entropy($q_{m}$)]{
        \includegraphics[width=0.32\textwidth]{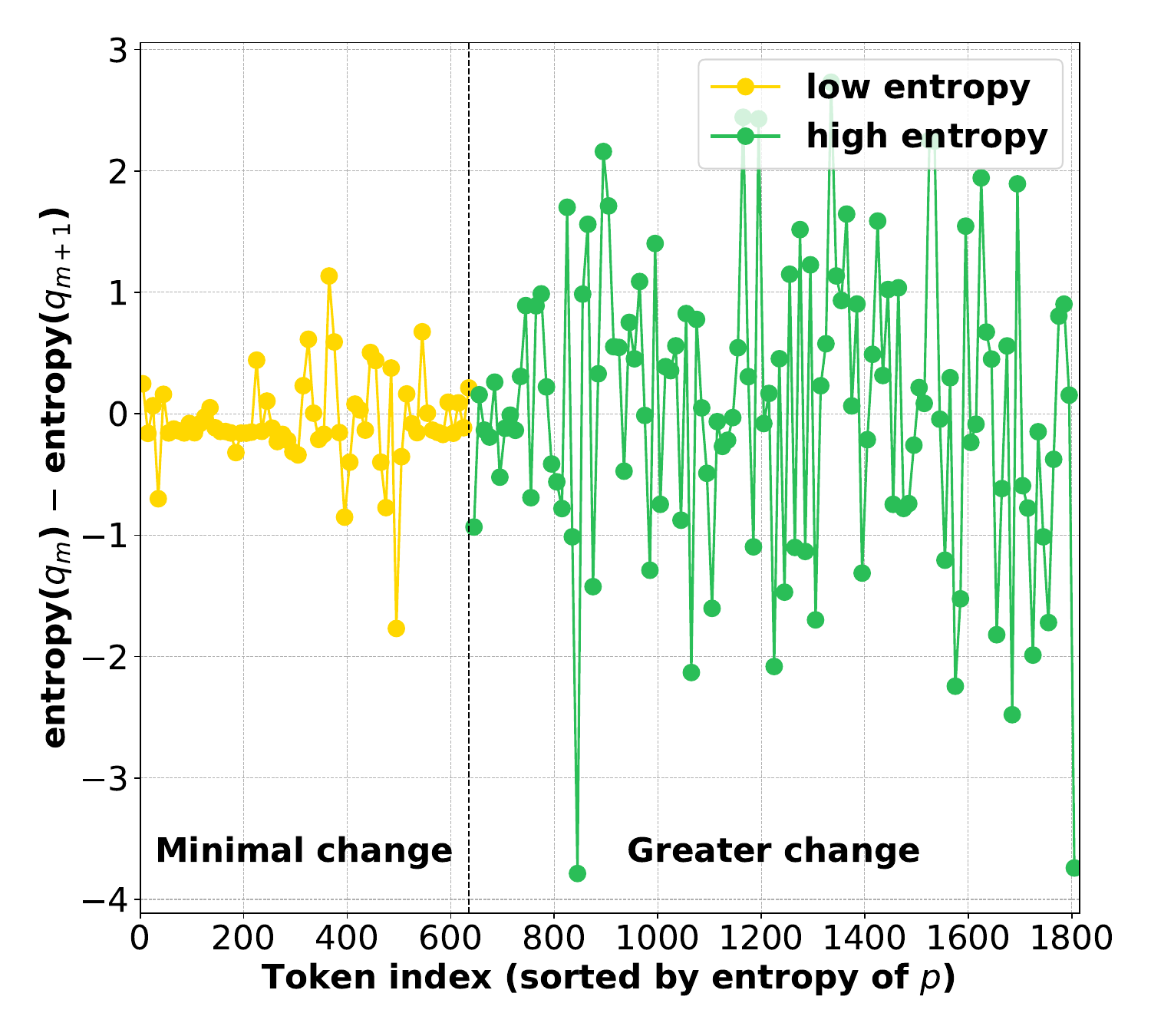} 
        \label{fig:app_delta_entropy_q}
    }\hfill
    \subfloat[$\Delta_{L_1}$]{
        \includegraphics[width=0.32\textwidth]{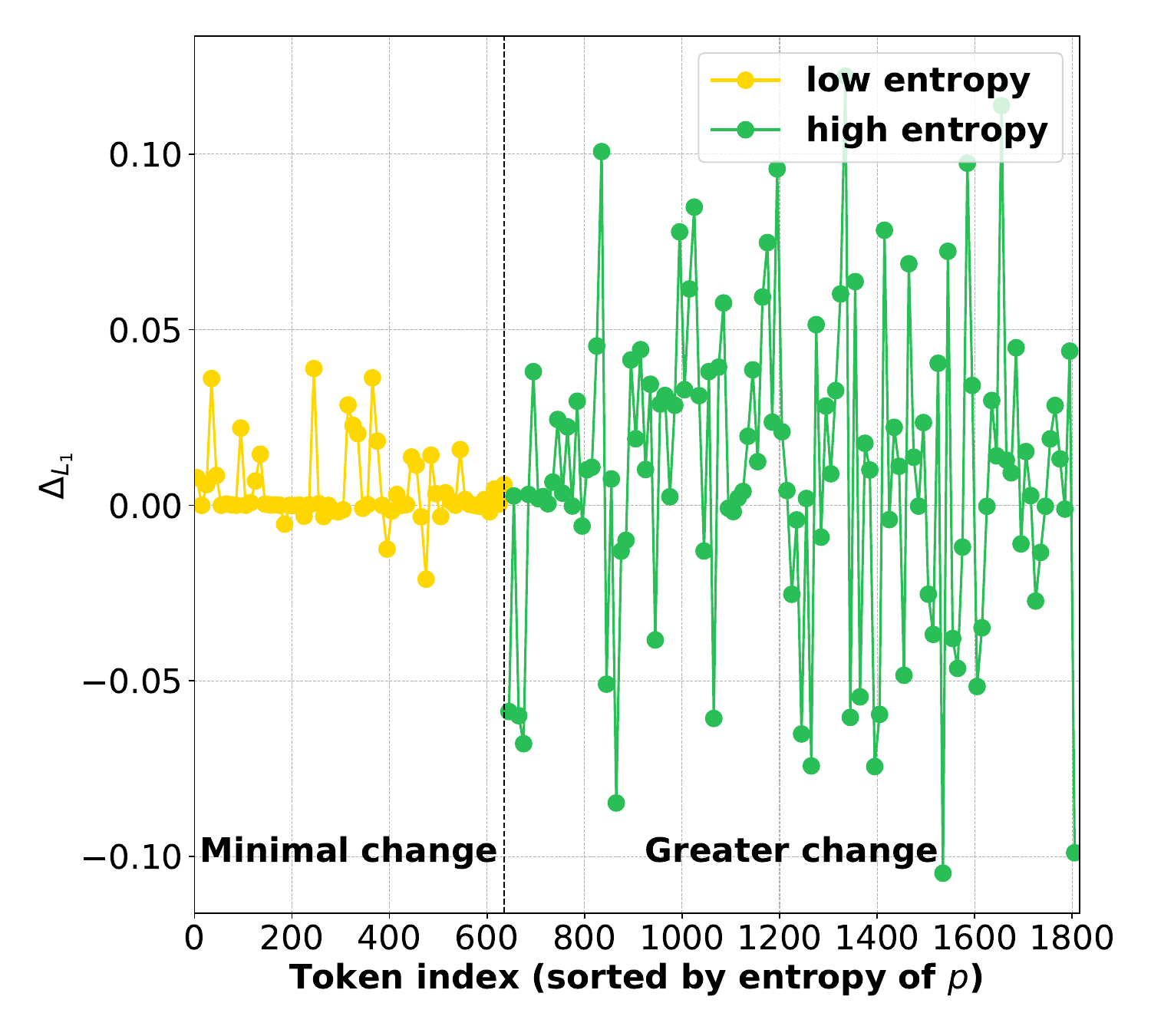} 
        \label{fig:app_delta_l1_entropy}
    }

    \caption{\textcolor{myblue}{\textbf{Target-sorted entropy view.}
    Tokens sorted by the target's entropy (low$\rightarrow$high entropy).
    (a) target entropy values; (b) one-epoch change in draft entropy; (c) the one-epoch reduction in the $L_1$ discrepancy, $\Delta_{L_1}$.
    Similar to the flatness view, we observe that tokens with higher entropy (indicating flatter distributions) contribute more to the training dynamics ($\Delta_{L_1}$) and exhibit larger changes in the draft model.}
    }
    \label{fig:entropy_sorted_view}
\end{figure}

In Section~\ref{sec:validation-and-application}, we utilize flatness as the primary metric to characterize token distributions. To further validate our findings and verify that the observed trends are not an artifact of the specific metric choice, we perform an additional analysis using entropy. We adopt the identical experimental setup as used for the flatness analysis in Figure~\ref{fig:target_sorted_view_three}: we calculate the entropy of the target distribution for all tokens, sort the tokens by their target entropy from low to high, and apply a 10-point moving average for visualization. The results are shown in Figure~\ref{fig:entropy_sorted_view}. We observe that the entropy-based curves exhibit a trend remarkably similar to the flatness curves shown in the main text. In the low-entropy region, the draft statistics and $\Delta_{L_1}$ change only slightly within one epoch, whereas in the high-entropy region the changes are more pronounced. This similarity is theoretically expected because both metrics fundamentally measure the distance between the token distribution $p$ and the uniform distribution $U$. While flatness is defined via cosine similarity, entropy is directly related to the forward KL divergence from the uniform distribution (where $U(x) = 1/V$):
\begin{equation}
    D_{KL}(p \| U) = \sum_{x} p(x) \log \frac{p(x)}{1/V} = \sum_{x} p(x) \log p(x) + \log V = -H(p) + \text{const}.
\end{equation}
Thus, maximizing entropy is equivalent to minimizing the KL divergence from the uniform distribution. The consistency between Figure~\ref{fig:target_sorted_view_three} (flatness) and Figure~\ref{fig:entropy_sorted_view} (entropy) again confirms that the "flatness" or "uncertainty" of the target distribution is indeed the key factor driving the value of training tokens in speculative decoding.

\begin{figure}[t]
\captionsetup{labelfont={color=myblue}}
  \centering
  \includegraphics[width=\linewidth]{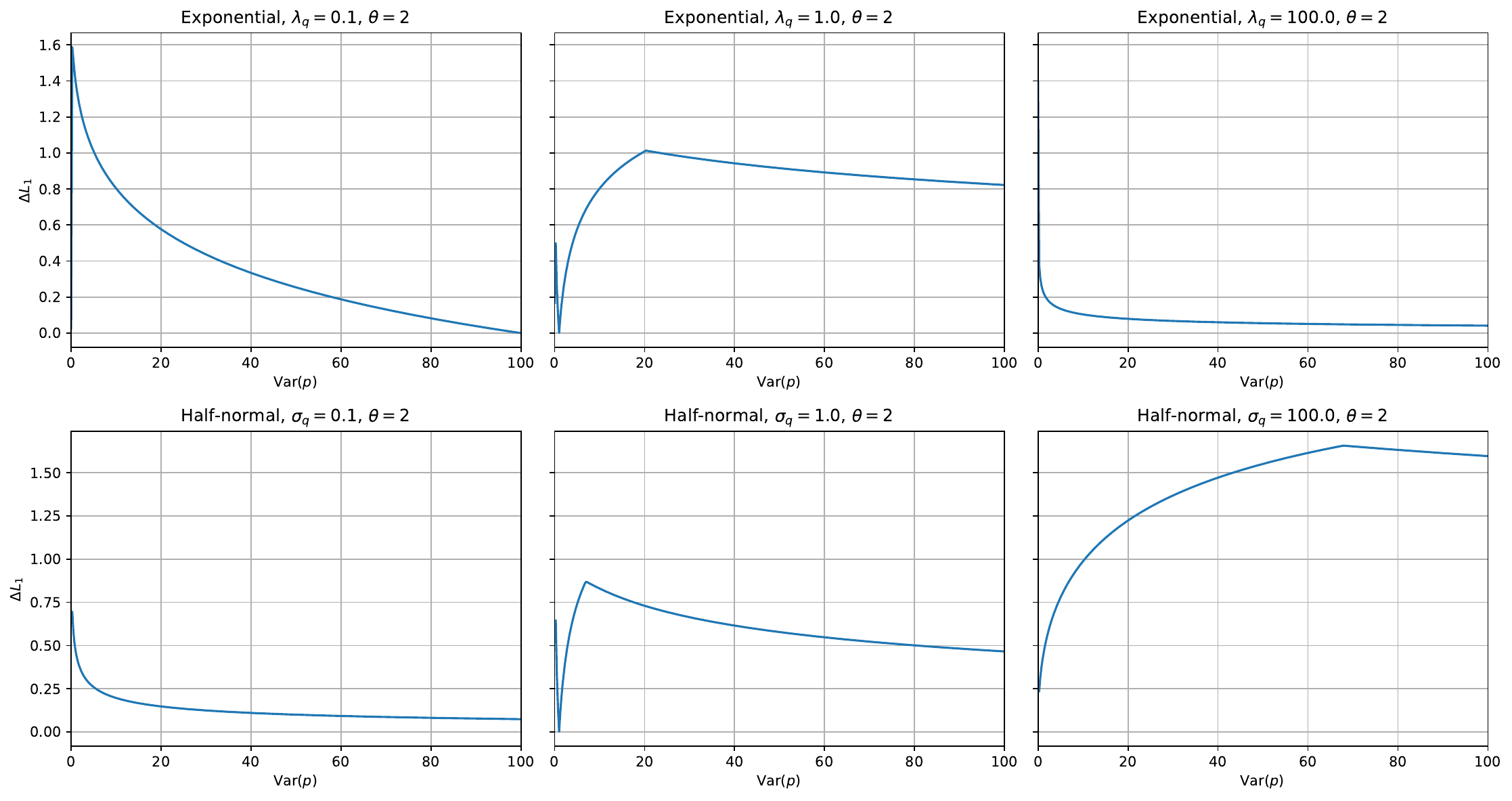}
  \caption{\textcolor{myblue}{\textbf{Behavior of \(\Delta L_1\) as a function of the teacher variance in the
  Exponential and Half-normal families under a large KL budget \(\theta=2\).}
  Top row: Exponential family with \(q = \mathrm{Exp}(\lambda_q)\),
  \(\lambda_q \in \{0.1, 1, 100\}\).
  Bottom row: Half-normal family with \(q = \mathrm{HalfNormal}(\sigma_q)\),
  \(\sigma_q \in \{0.1, 1, 100\}\).
  In all panels, the $x$-axis is \(\mathrm{Var}(p)\) and the $y$-axis is
  \(\Delta L_1 = \|p - q\|_1 - \|p - r^*\|_1\).}}
  \label{fig:exp-hn-deltaL1-variance}
\end{figure}

\subsection{Analysis of exponential and half-normal distributions}
\label{app:exp-hn}

In this section, we extend the KL-constrained update from Gaussian
to another two distributions: Exponential and Half-normal.
We record the core closed-form expressions in our simulations.

\paragraph{Exponential distribution.}
We consider
\begin{equation}
  p = \mathrm{Exp}(\lambda_p), \quad
  q = \mathrm{Exp}(\lambda_q), \quad
  r = \mathrm{Exp}(\lambda_r),
\end{equation}
with density \(f_{\lambda}(x) = \lambda e^{-\lambda x}\) on \(x \ge 0\).
The KL divergence is
\begin{equation}
  D_{\mathrm{KL}}\bigl(\mathrm{Exp}(\lambda_1)\,\Vert\,
                        \mathrm{Exp}(\lambda_2)\bigr)
  = \log\frac{\lambda_1}{\lambda_2} + \frac{\lambda_2}{\lambda_1} - 1 .
  \label{eq:kl-exp-core}
\end{equation}
For fixed draft parameter \(\lambda_q\) and budget \(\theta > 0\),
the KL ball \(\{r : D_{\mathrm{KL}}(r \Vert q) \le \theta\}\) induces a closed interval
\([\lambda_{\mathrm{low}}, \lambda_{\mathrm{high}}]\) on \(\lambda_r\),
where \(\lambda_{\mathrm{low}}\) and \(\lambda_{\mathrm{high}}\) are the two positive solutions of
\(D_{\mathrm{KL}}(\mathrm{Exp}(\lambda_r)\Vert\mathrm{Exp}(\lambda_q)) = \theta\),
which we solve numerically.
The unconstrained minimizer of the update is \(\lambda_p\), so the optimal update
parameter is simply the projection of \(\lambda_p\) onto this interval.
The teacher variance in this family is
\begin{equation}
  \mathrm{Var}(p) = \frac{1}{\lambda_p^2}.
\end{equation}
In our simulations, for each variance value we set
\(\lambda_p = 1/\sqrt{\mathrm{Var}(p)}\), compute the projected
\(\lambda_r\), and then evaluate \(\Delta L_1\) using the closed-form
\(L_1\) distance between Exponential distributions (omitted here for brevity).

\paragraph{Half-normal distributions.}
We consider
\begin{equation}
  p = \mathrm{HalfNormal}(\sigma_p), \quad
  q = \mathrm{HalfNormal}(\sigma_q), \quad
  r = \mathrm{HalfNormal}(\sigma_r),
\end{equation}
with density
\(f_{\sigma}(x) = \tfrac{\sqrt{2}}{\sigma\sqrt{\pi}} e^{-x^2/(2\sigma^2)}\) on \(x \ge 0\).
Writing \(v = \sigma^2\), the KL divergence is
\begin{equation}
  D_{\mathrm{KL}}\bigl(\mathrm{HalfNormal}(\sigma_1)\,\Vert\,
                        \mathrm{HalfNormal}(\sigma_2)\bigr)
  = \frac{1}{2}
    \Bigl[
      \log\frac{v_2}{v_1} + \frac{v_1}{v_2} - 1
    \Bigr],
  \quad v_i = \sigma_i^2 .
  \label{eq:kl-hn-core}
\end{equation}
For fixed \(v_q = \sigma_q^2\) and budget \(\theta\), the KL ball
\(\{r : D_{\mathrm{KL}}(r \Vert q) \le \theta\}\) induces a closed interval
\([v_{\mathrm{low}}, v_{\mathrm{high}}]\) on \(v_r\),
where \(v_{\mathrm{low}}\) and \(v_{\mathrm{high}}\) are the two positive solutions of
\(D_{\mathrm{KL}}(\mathrm{HalfNormal}(\sigma_r)\Vert\mathrm{HalfNormal}(\sigma_q)) = \theta\),
again solved numerically.
The unconstrained minimizer is \(v_p\), so the optimal update variance is the projection
of \(v_p\) onto \([v_{\mathrm{low}}, v_{\mathrm{high}}]\), and
\(\sigma_r = \sqrt{v_r}\).
\begin{equation}
  \mathrm{Var}(p) = \sigma_p^2 = v_p.
\end{equation}
In our simulations, for each variance value \(\mathrm{Var}(p)\) we set
\(\sigma_p = \sqrt{\mathrm{Var}(p)}\), we compute the projected \(v_r\),
and evaluate \(\Delta L_1\) using the closed-form \(L_1\) distance
between Half-normal distributions.

\paragraph{Numerical behavior of \texorpdfstring{$\Delta L_1$}{Delta L1} vs.\ variance.}
Using the formulas above, we numerically evaluate \(\Delta L_1(\mathrm{Var}(p))\)
under a KL budget \(\theta = 2\) for several draft parameters.
For the Exponential family, we fix
\(q = \mathrm{Exp}(\lambda_q)\) with \(\lambda_q \in \{0.1, 1, 100\}\) and sweep
\(\mathrm{Var}(p) \in [0, 100]\).
For the Half-normal family, we fix
\(q = \mathrm{HalfNormal}(\sigma_q)\) with \(\sigma_q \in \{0.1, 1, 100\}\) and sweep
\(\mathrm{Var}(p) = \sigma_p^2 \in [0, 100]\).
The resulting six curves are shown in Figure~\ref{fig:exp-hn-deltaL1-variance}.
Across the six panels, the shapes of the \(\Delta L_1\)–variance curves differ substantially
as \(\lambda_q\) or \(\sigma_q\) varies: the dependence on \(\mathrm{Var}(p)\) is highly
sensitive to the draft scale, and no simple, consistent monotone trend emerges.
In these single-parameter scale families, \(p\) and \(q\) are forced to share the same mode,
so the effect of the teacher’s variance on \(\Delta L_1\) becomes entangled with the draft’s
variance and no longer isolates a clean flatness effect, in contrast to the Gaussian
location--scale model used in the main text.

\paragraph{Limitations.}
These two families have clear limitations compared with the Gaussian family. 
First, they are both single-parameter scale families whose mode is fixed at $x=0$, which forces $p$ and $q$ to share the same maximizer for any choice of parameters. In the LLM setting, this essentially corresponds to the regime where the draft model has already learned the teacher’s top-1 token quite well, whereas in speculative decoding we are often more interested in tokens where the teacher and draft still have noticeably different top-1 predictions. Under this “mode-locked’’ assumption, the effect of the teacher’s variance on $\Delta L_1$ becomes strongly entangled with the draft model’s variance, and no longer cleanly reflects a flatness effect. In contrast, the Gaussian location–scale family allows us to vary the mean and the variance independently, which better captures the joint effect of argmax mismatch and distributional shape differences observed in real LLM logits.
Second, although the KL-constrained optimal update $r^*$ can also be solved for the Exponential and Half-normal cases, it requires solving one-dimensional equations (involving, e.g., Lambert-$W$ functions) and does not admit a simple one-dimensional KKT parameterization as in the Gaussian case, making the expressions less transparent and not revealing any additional phenomena.

\begin{figure}[h]
\captionsetup{labelfont={color=myblue}}
    \centering
    \subfloat[Average Gradient Norm]{
        \includegraphics[width=0.48\textwidth]{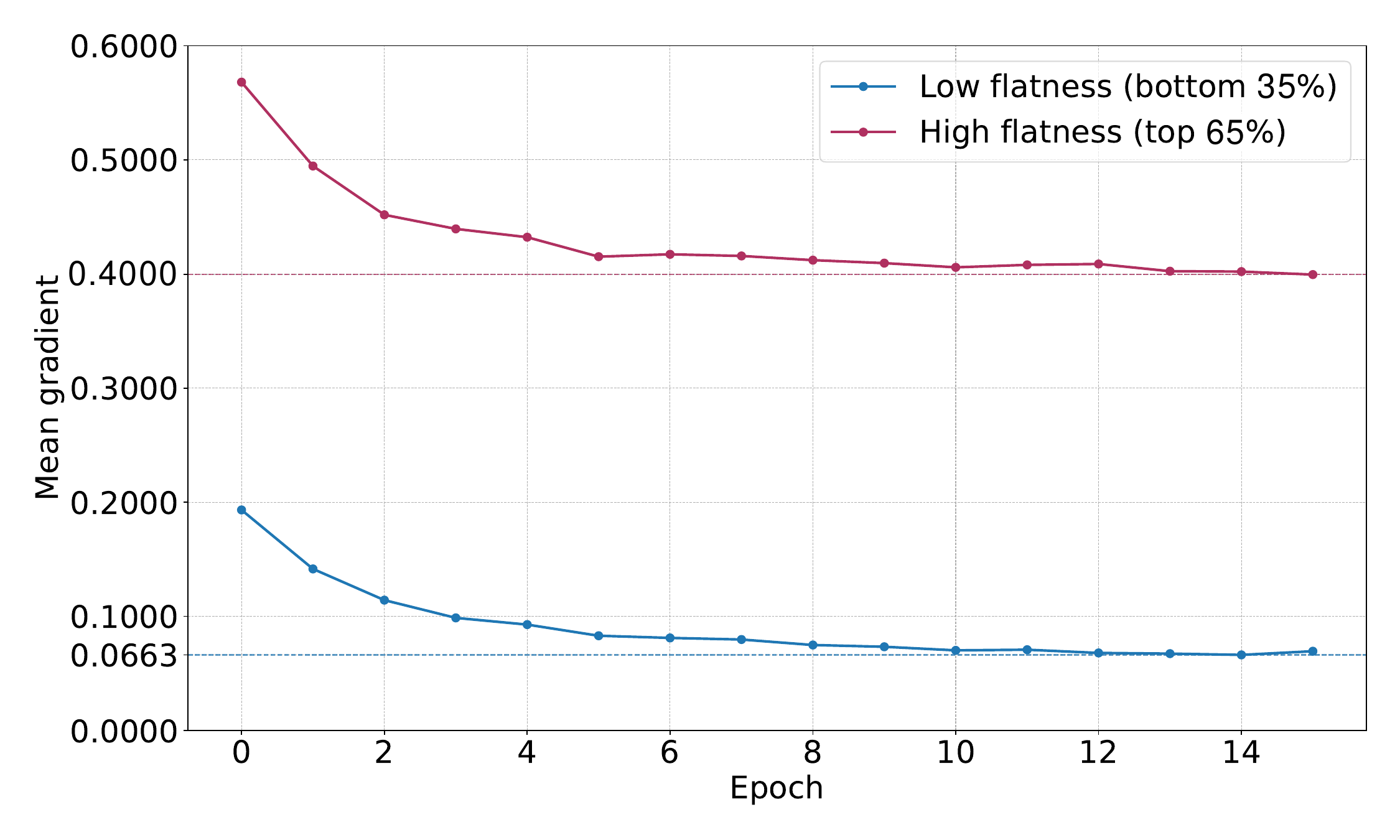}
        \label{fig:gradnorm_saturation}
    }
    \hfill
    \subfloat[Average Loss]{
        \includegraphics[width=0.48\textwidth]{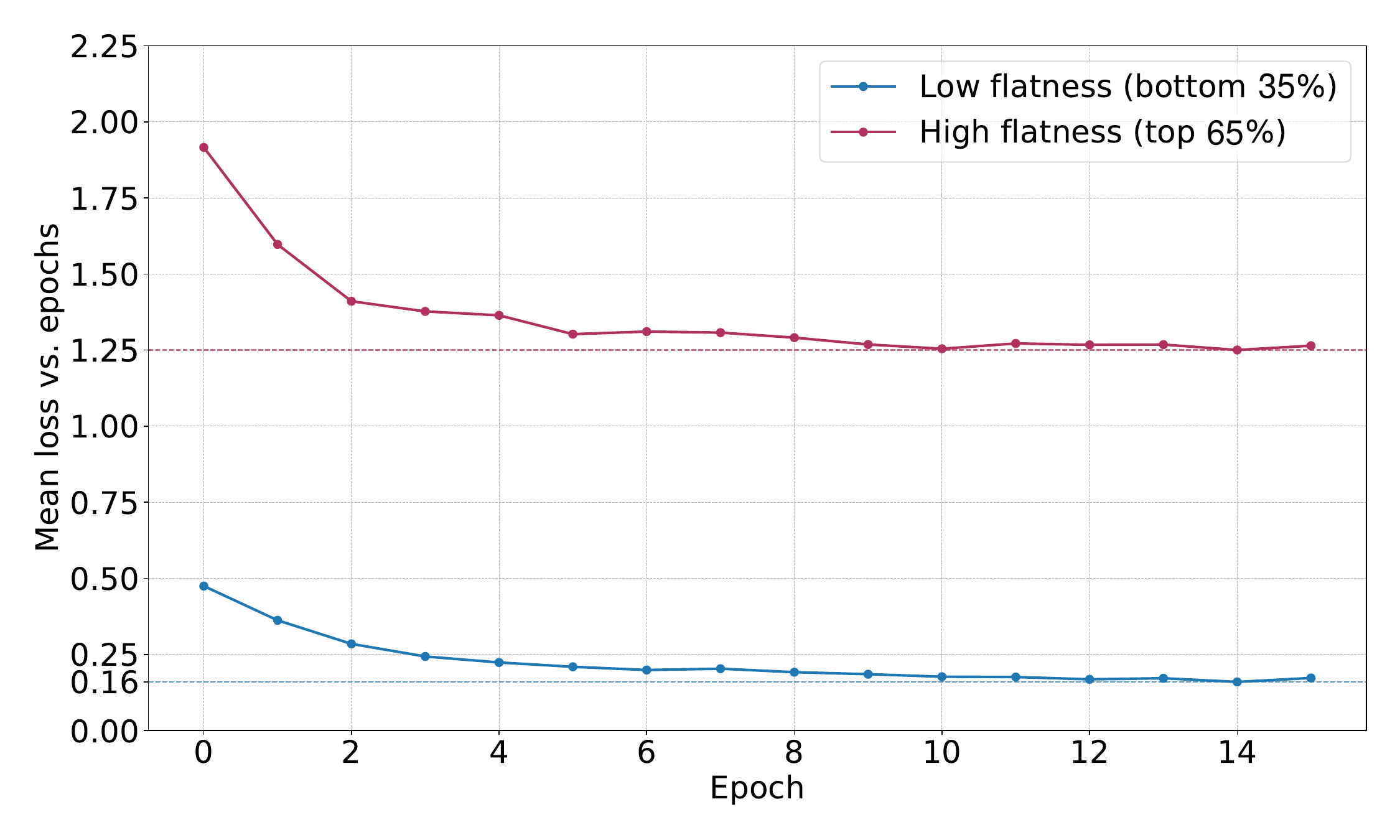}
        \label{fig:loss_saturation}
    }
    
    \caption{\textcolor{myblue}{Training dynamics of tokens with different flatness levels.
    We track the metrics for the bottom 35\% (Low Flatness) and top 65\% (High Flatness) tokens over 15 epochs.
    \textbf{(a) Gradient Norm:} Low-flatness tokens exhibit consistently smaller gradient norms, which quickly decay to negligible values (below 0.1), indicating that the model ceases to learn from them early in training. In contrast, high-flatness tokens maintain significantly larger gradient norms.
    \textbf{(b) Loss Curves:} The loss for low-flatness tokens rapidly converges to near-zero and remains flat, confirming early saturation. Meanwhile, high-flatness tokens maintain non-trivial loss values throughout the epochs, indicating that they continue to provide gradient signals for a longer period.}}
    \label{fig:saturation_analysis}
\end{figure}

\subsection{Why we do not use variance of the discrete token distribution}
\label{sec:variance_discussion}

In this section, we further clarify why the variance of the discrete distribution produced by the target LLM is not utilized as a selection criterion in our approach. In the continuous setting, variance relies on the natural order and metric of the real line and therefore has a clear geometric and statistical interpretation. In our setting, however, the language model outputs a categorical distribution over tokens such as ``how'', ``cat'', or punctuation marks, which do not live on a canonical one-dimensional numeric axis. If we assign arbitrary indices to tokens and compute a ``variance'' over these indices, the resulting value would depend entirely on how the vocabulary is indexed: simply permuting the token order would change the variance, so it cannot serve as a robust selection metric. For this reason, when measuring the uncertainty or flatness of LLM outputs, we instead rely on permutation-invariant metrics over the probability vector, such as the cosine-based flatness measure proposed in our paper and other quantities like entropy.

\subsection{Quantitative analysis of token saturation}
\label{app:saturation_analysis}

To quantitatively verify the saturation behavior of tokens with different flatness levels, we conduct a tracking experiment. Specifically, we randomly select 10 training samples and sort all constituent tokens by their target flatness. We then split these tokens into two groups: the bottom 35\% (representing low-flatness tokens) and the top 65\% (representing high-flatness tokens). We track their average gradient norm and loss values from epoch 0 to 15. The results are shown in Figure~\ref{fig:saturation_analysis}, from which we can have the following three findings: (i) Low-flatness tokens exhibit consistently smaller gradient norms compared to high-flatness tokens across all epochs. Their gradient norms quickly decay to negligible values (below 0.1), indicating that the model ceases to learn from them early in training. (ii) The loss for low-flatness tokens rapidly converges to near-zero and remains flat, confirming early saturation. (iii) In contrast, high-flatness tokens maintain significantly larger gradient norms and non-trivial loss values throughout epochs 0 to 15, indicating that they continue to provide useful gradient signals for a longer period. These results quantitatively support our claim that low-flatness tokens saturate quickly during training, contributing minimal learning signals in later stages, whereas high-flatness tokens continue to drive the optimization process.

\subsection{Further analysis of token-level filtering}
\label{app:token_level_filtering}

In the initial phase of this work, we indeed investigate a token-level filtering strategy as a precursor to our sample-level approach. However, under the current training framework, this token-level filtering does not lead to meaningful wall-clock training speedups. The main reasons are two-fold.
\begin{itemize}
    \item \textbf{Forward-pass integrity.} To preserve the necessary autoregressive context for language modeling, the forward pass must be computed over the full sequence. Consequently, masking specific tokens at the loss calculation stage does not obviate the need for their forward computation, yielding no savings in this phase.
    \item \textbf{Negligible savings on the backward pass.} The dominant computational cost in training arises from backpropagation. While token-level loss masking (or clipping) effectively zeroes out gradients at the output layer for masked tokens, standard frameworks must still perform backpropagation through all intermediate Transformer layers over the full sequence length. As a result, the reduction in gradient computation is marginal.
\end{itemize}
In contrast to token-level filtering, sample-level filtering can discard entire sequences and completely skip both the forward and backward passes for those samples, which is why we ultimately adopt the sample-level SFDD strategy. However, if future training infrastructures allow skipping masked tokens in both forward and backward passes, we believe that flatness-based token-level filtering would become a very promising direction for efficient draft model training.

\subsection{The necessity of efficient draft model training}
\label{app:training_necessity}

This section is to clarify that training has become an essential and increasingly non-negligible component in modern speculative decoding (SD) pipelines. While early SD work may rely on a single lightweight SFT step, recent advances have systematically scaled up the training required for competitive acceptance lengths and speedups. As a result, training cost has become more important in realistic deployment scenarios, primarily due to the following two trends:
\begin{itemize}
    \item 
    \textbf{Multi-step SFT.}
    For example, HASS~\citep{zhang2025learning} and EAGLE-3~\citep{li2025eagle-3} employ more complex training objectives to improve acceptance rates and speedups. These methods are no longer a single lightweight SFT step; instead, they require multi-epoch and heavily supervised training to maximize the achievable acceptance rates and speedups.
    \item 
    \textbf{RL-style training.}
    For example, GTO~\citep{hu2025bridging} introduces tree-policy mechanisms and leverages PPO-style optimization, followed by a second-stage RL refinement of the draft model. As reported in its appendix, this refinement incurs substantial training overheads---e.g., 200/400/900 GPU-hours of extra cost for 7B/13B/70B models.
\end{itemize}
Therefore, as SD training continues to scale up, training efficiency can no longer be treated as negligible. SFDD is designed to be used on top of these methods: it reduces the amount of high-value training data required to sustain almost the same inference speedups, keeping the additional training cost within a more reasonable budget.

\subsection{Analysis of unexpected performance drop at 60\% retention}

\textcolor{myblue}{We note that, for SFDD, the average acceptance lengths at 50\% and 60\% retention are almost the same on both MTB and NQ, and the resulting speedups only differ slightly. Because these two retention ratios use very similar amounts of training data, we view such small differences as normal randomness from subset selection and measurement noise, rather than a statistically reliable performance drop. To investigate this, we repeat the experiments on MT-Bench and NQ two additional times under both the 50\% and 60\% retention settings, and summarize the results in Table~\ref{tab:mtb_nq_repeat}. We observe that, with nearly identical acceptance lengths, the speedups obtained at 50\% and 60\% retention remain close across different runs, and the differences between them stay within a small magnitude.}

\begin{table}[h]
\centering
\small
\captionsetup{labelfont={color=myblue}}
\caption{\textcolor{myblue}{Repeated runs of SFDD at 50\% and 60\% retention on MT-Bench and NQ. The two runs show that speedup and $l$ remain almost the same across repeated experiments.}}
\label{tab:mtb_nq_repeat}
\color{myblue}{\begin{tabular}{llcccc}
\toprule
Dataset & Retention & Speedup (run 1) ($\times$) & $l$ (run 1) & Speedup (run 2) ($\times$) & $l$ (run 2) \\
\midrule
MT-Bench & 50\% & 2.4206 $\times$ & 2.5960 & 2.3947 $\times$ & 2.5960 \\
MT-Bench & 60\% & 2.3958 $\times$ & 2.5742 & 2.3927 $\times$ & 2.5742 \\
NQ       & 50\% & 2.1323 $\times$ & 2.1676 & 2.1352 $\times$ & 2.1676 \\
NQ       & 60\% & 2.1325 $\times$ & 2.1544 & 2.1345 $\times$ & 2.1544 \\
\bottomrule
\end{tabular}}
\end{table}

\section{Additional experimental results}

\subsection{experiments beyond LLaMA3-8B-Instruct and shareGPT}
\label{app:additional_experiments}

To further demonstrate the generalization and robustness of our proposed SFDD method, we conduct additional experiments extending beyond the primary LLaMA3-8B-Instruct and ShareGPT. First, we apply SFDD to a different model family, Vicuna-7B-v1.3. The experimental settings for this evaluation are identical to those used in the main paper. Second, to assess robustness on a distinct data distribution, we train the EAGLE draft model (based on LLaMA3-8B-Instruct) from scratch using the GSM8K training split, and evaluate its performance on the test set. The results are summarized in Table~\ref{tab:vicuna_results} and Table~\ref{tab:gsm8k_scratch}. The experimental results are basically consistent with the main results in our paper, demonstrating the effectiveness of flatness.

\begin{table}[h]
\captionsetup{labelfont={color=myblue}}
\centering
\caption{\textcolor{myblue}{Comparison of metrics for data importance on Vicuna-7B-v1.3 at a 50\% retain ratio.}}
\label{tab:vicuna_results}
\resizebox{\textwidth}{!}{%
\color{myblue}\begin{tabular}{l|cc|cc|cc|cc|cc|cc}
\toprule
& \multicolumn{2}{c|}{\textbf{GSM8K}} & \multicolumn{2}{c|}{\textbf{Alpaca}} & \multicolumn{2}{c|}{\textbf{MTB}} & \multicolumn{2}{c|}{\textbf{CNN/DM}} & \multicolumn{2}{c|}{\textbf{NQ}} & \multicolumn{2}{c}{\textbf{Average}} \\
\cmidrule(lr){2-3} \cmidrule(lr){4-5} \cmidrule(lr){6-7} \cmidrule(lr){8-9} \cmidrule(lr){10-11} \cmidrule(lr){12-13}
Method & Speedup & $l$ & Speedup & $l$ & Speedup & $l$ & Speedup & $l$ & Speedup & $l$ & Speedup & $l$ \\
\midrule
No Filter & 2.98$\times$ & 3.50 & 2.71$\times$ & 2.95 & 3.14$\times$ & 3.03 & 2.60$\times$ & 2.76 & 2.26$\times$ & 2.22 & 2.74$\times$ & 2.89 \\
\midrule
Random & 2.83$\times$ & 3.36 & 2.68$\times$ & 2.83 & 2.72$\times$ & 2.97 & 2.40$\times$ & 2.48 & 2.21$\times$ & 2.17 & 2.57$\times$ & 2.76 \\
Entropy & 2.81$\times$ & 3.36 & 2.72$\times$ & 2.90 & 2.83$\times$ & 2.99 & 2.49$\times$ & 2.47 & 2.35$\times$ & 2.31 & 2.64$\times$ & 2.81 \\
Top-1 Probability & 2.69$\times$ & 3.24 & 2.74$\times$ & 2.94 & 2.84$\times$ & 3.00 & 2.57$\times$ & 2.64 & 2.25$\times$ & 2.25 & 2.62$\times$ & 2.81 \\
Margin & 2.65$\times$ & 3.18 & 2.66$\times$ & 2.79 & 2.71$\times$ & 2.87 & 2.53$\times$ & 2.55 & 2.23$\times$ & 2.18 & 2.56$\times$ & 2.71 \\
Energy Score & 2.78$\times$ & 3.34 & 2.76$\times$ & 2.85 & 2.74$\times$ & 2.97 & 2.57$\times$ & 2.64 & 2.27$\times$ & 2.21 & 2.62$\times$ & 2.80 \\
PPL & 2.78$\times$ & 3.35 & 2.68$\times$ & 2.82 & 2.75$\times$ & 2.97 & 2.52$\times$ & 2.57 & 2.28$\times$ & 2.23 & 2.60$\times$ & 2.79 \\
\textbf{SFDD (Ours)} & \textbf{2.96}$\times$ & \textbf{3.40} & \textbf{2.79}$\times$ & \textbf{3.04} & \textbf{3.16}$\times$ & \textbf{3.09} & \textbf{2.63}$\times$ & \textbf{2.71} & \textbf{2.40}$\times$ & \textbf{2.34} & \textbf{2.79}$\times$ & \textbf{2.92} \\
\bottomrule
\end{tabular}%
}
\end{table}

\begin{table}[h]
\captionsetup{labelfont={color=myblue}}
\centering
\caption{\textcolor{myblue}{Comparison on the GSM8K training set. The results are evaluated on the test set. The EAGLE draft model (based on LLaMA3-8B-Instruct) is trained from scratch on the GSM8K training split at a 50\% retain ratio.}}
\label{tab:gsm8k_scratch}
\resizebox{0.4\textwidth}{!}{
\color{myblue}\begin{tabular}{l|cc}
\toprule
Method & Speedup & $l$ \\
\midrule
No Filter & 1.40$\times$ & 1.26 \\
\midrule
Random & 1.25$\times$ & 1.10 \\
Entropy & 1.31$\times$ & 1.18 \\
Top-1 Probability & 1.29$\times$ & 1.15 \\
Margin & 1.28$\times$ & 1.13 \\
Energy Score & 1.31$\times$ & 1.18 \\
PPL & 1.31$\times$ & 1.17 \\
\textbf{SFDD (Ours)} & \textbf{1.38$\times$} & \textbf{1.24} \\
\bottomrule
\end{tabular}
}
\end{table}

\subsection{Robustness of sample-level aggregation strategies}
\label{app:aggregation_robustness}

In our method, we utilize the arithmetic mean to aggregate token-level flatness scores. To verify robustness, we evaluate an alternative aggregation strategy: the median. We conduct experiments using the same setup as in the main paper (LLaMA3-8B-Instruct, 50\% retain ratio). As shown in Table~\ref{tab:median_aggregation_appendix}, we observe that the performance under median aggregation is very similar to the mean-aggregation results in Table~\ref{tab:main_results}. These findings indicate that introducing a robust median-based aggregator does not change our main conclusions and is consistent with our mean-based analysis.

\begin{table}[h]
\captionsetup{labelfont={color=myblue}}
\centering
\caption{\textcolor{myblue}{Comparison of SFDD and Top-1 Probability at a 50\% retain ratio under median aggregation over tokens within each sample; No Filter and Random are also included.}}
\label{tab:median_aggregation_appendix}
\resizebox{\textwidth}{!}{%
\color{myblue}\begin{tabular}{l|cc|cc|cc|cc|cc|cc}
\toprule
& \multicolumn{2}{c|}{\textbf{GSM8K}} & \multicolumn{2}{c|}{\textbf{Alpaca}} & \multicolumn{2}{c|}{\textbf{MTB}} & \multicolumn{2}{c|}{\textbf{CNN/DM}} & \multicolumn{2}{c|}{\textbf{NQ}} & \multicolumn{2}{c}{\textbf{Average}} \\
\cmidrule(lr){2-3} \cmidrule(lr){4-5} \cmidrule(lr){6-7} \cmidrule(lr){8-9} \cmidrule(lr){10-11} \cmidrule(lr){12-13}
Method & Speedup & $l$ & Speedup & $l$ & Speedup & $l$ & Speedup & $l$ & Speedup & $l$ & Speedup & $l$ \\
\midrule
No Filter & 2.71$\times$ & 3.28 & 2.71$\times$ & 2.89 & 2.53$\times$ & 2.77 & 2.30$\times$ & 2.58 & 2.19$\times$ & 2.37 & 2.49$\times$ & 2.78 \\
\midrule
Random & 2.43$\times$ & 2.85 & 2.37$\times$ & 2.59 & 2.26$\times$ & 2.48 & 1.99$\times$ & 2.31 & 1.93$\times$ & 2.06 & 2.20$\times$ & 2.46 \\
Top-1 Probability & 2.50$\times$ & 2.89 & 2.45$\times$ & 2.66 & 2.26$\times$ & 2.52 & 2.00$\times$ & 2.34 & 1.95$\times$ & 2.12 & 2.23$\times$ & 2.51 \\
\textbf{SFDD (Ours)} & \textbf{2.66$\times$} & \textbf{2.94} & \textbf{2.69$\times$} & \textbf{2.73} & \textbf{2.47$\times$} & \textbf{2.59} & \textbf{2.16$\times$} & \textbf{2.37} & \textbf{2.16$\times$} & \textbf{2.18} & \textbf{2.43$\times$} & \textbf{2.56} \\
\bottomrule
\end{tabular}%
}
\end{table}

}

\section{Statement on the use of large language models}
In this work, we use LLMs to assist with improving the clarity, grammar, and overall readability of the text. All technical content, including theoretical claims and experimental results, is the original work of the authors. The LLMs only serve as a writing aid, and the final manuscript has been carefully reviewed and revised by the authors to ensure its accuracy and originality.

\end{document}